\documentclass{article}

\usepackage[preprint,nonatbib]{neurips_2026}  %

\usepackage[utf8]{inputenc} %
\usepackage[T1]{fontenc}    %
\usepackage{hyperref}       %
\usepackage{url}            %
\usepackage{booktabs}       %
\usepackage{amsfonts}       %
\usepackage{nicefrac}       %
\usepackage{microtype}      %
\usepackage{xcolor}         %

\usepackage{graphicx}       %
\usepackage{amsmath}
\usepackage{amssymb}
\usepackage{amsthm}
\usepackage{mathtools}
\usepackage{algorithm}
\usepackage{algorithmic}
\usepackage{multirow}
\usepackage{makecell}
\usepackage{array}
\usepackage{enumitem}
\usepackage{caption}
\usepackage{subcaption}
\usepackage[many]{tcolorbox}
\usepackage{cleveref}
\usepackage{needspace}
\usepackage[numbers,sort,compress,sectionbib]{natbib}

\theoremstyle{plain}
\newtheorem{theorem}{Theorem}
\newtheorem{proposition}[theorem]{Proposition}

\theoremstyle{definition}

\theoremstyle{remark}

\title{Correlation-Aware Exploration for Long-Trace Reasoning}

\workshoptitle{Long-Context Foundation Models}

\newcommand{\blfootnote}[1]{%
  \begingroup
  \renewcommand{\thefootnote}{}%
  \footnotetext{#1}%
  \endgroup
}

\author{%
  \bfseries Junxing Hu\textsuperscript{1} \quad
  Zicheng Zhang\textsuperscript{1,*} \quad
  Zhiqian Zhang\textsuperscript{1} \quad
  Ai Han\textsuperscript{1} \\
  \bfseries Yifeng Zhang\textsuperscript{1} \quad
  Pengzhang Liu\textsuperscript{1} \quad
  Ling Li\textsuperscript{2} \\[0.6em]
  \normalfont \textsuperscript{1}JD.com \qquad \textsuperscript{2}Microsoft
}

\begin{document}

\maketitle

\blfootnote{\textsuperscript{*}Corresponding author.}

\begin{abstract}
Reasoning models trained with Group Relative Policy Optimization (GRPO) now produce chains of thought thousands of tokens long, and recent studies reveal that such training is prone to policy collapse. Entropy regularization is the standard remedy, but it works by pushing probability mass away from the few tokens that carry it and toward the vast low-probability tail of the vocabulary, and it applies this force once for every token of the reasoning trace. This puts the bonus in direct conflict with the entropy decay that convergence requires, and makes its coefficient difficult to set in the long-trace regime. To resolve this tension, we propose Info-GRPO, an information-theoretic framework that cultivates correlation between the policy and a latent prior. Rather than raising entropy everywhere, Info-GRPO builds on the mutual information between a latent variable and the trajectory it induces: augmenting prompts with latent seeds during training lets the model explore diverse policies correlated with the prior, while the seed-conditioned entropy is guided toward convergence. Diversity is therefore carried across whole reasoning traces rather than by per-token randomness, and exploration no longer fights entropy reduction. Experiments show that Info-GRPO outperforms vanilla GRPO and entropy-regularized GRPO across diverse reasoning benchmarks. On AIME24, it improves $Avg@8$ over GRPO by 3.75\%, 1.66\%, 4.16\%, and 4.17\% on four backbones, and it reaches the same majority-vote accuracy with $8\times$ fewer samples.
\end{abstract}

\section{Introduction}

Recent reasoning models~\citep{o12024,o32025,claude37sonnet,comanici2025gemini,guo2025deepseek} have shifted the locus of computation from the prompt to the model's own output: a single answer on a competition mathematics problem now routinely spans thousands of tokens of chain of thought~\citep{Kimi2025,yang2025qwen3}. Central to this progress is Reinforcement Learning with Verifiable Rewards (RLVR)~\citep{lambert2024tulu}, which supplies scalable outcome-based supervision. The context such a policy must manage is therefore largely \emph{self-generated}: a reasoning trace that grows as it is produced and that the training objective must shape end to end. Group Relative Policy Optimization (GRPO)~\citep{shao2024deepseekmath} makes this practical by replacing the critic with group-relative advantage estimation, but it inherits a well-known weakness: as an on-policy method it reinforces existing high-reward paths and collapses onto a narrow set of strategies~\citep{he2025skywork,ma2025learning,hong2018diversity}, which is precisely the failure mode that long, exploratory reasoning is supposed to avoid. Figure~\ref{first_pic} previews the behaviour we obtain instead.

\begin{figure}[!t]
\centering
\includegraphics[width=1.0\textwidth]{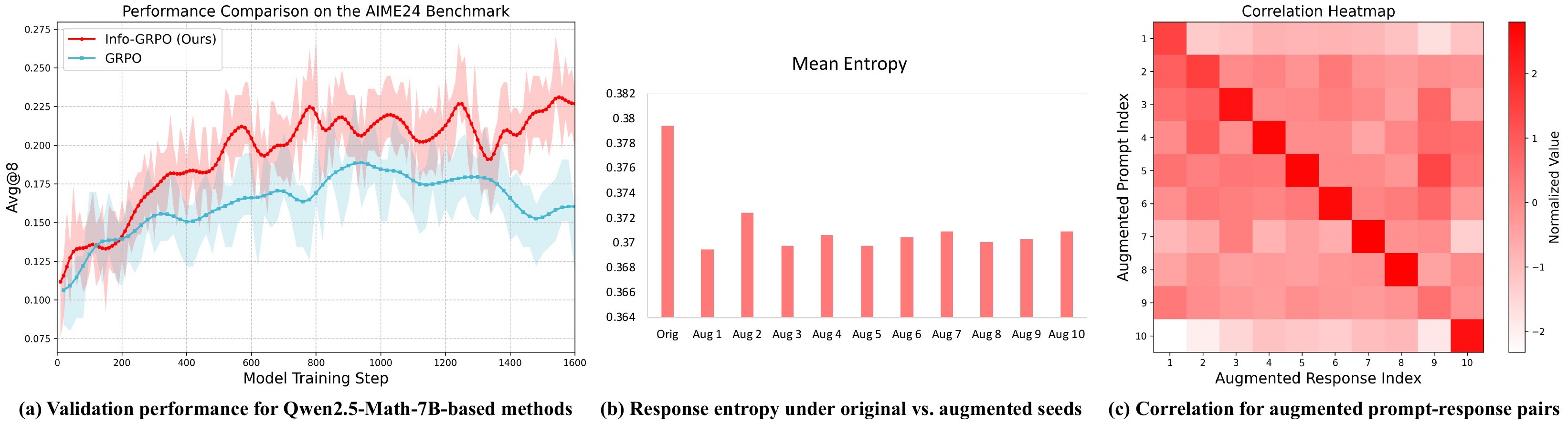}
\caption{
(a) Validation results on Qwen2.5-Math-7B show that Info-GRPO significantly outperforms the GRPO baseline.
(b) Info-GRPO encourages mild high-entropy policies from original prompts and low-entropy policies from seed-augmented prompts. (c) The high correlation between random seeds and their respective outputs shows that generation is reliably steered by the seed it is conditioned on.
}
\label{first_pic}
\vspace{-3mm}
\end{figure}

To counteract this collapse, the conventional remedy is an entropy bonus~\citep{ziebart2008maximum,haarnoja2018soft,yu2025dapo,cui2025entropy,wang2025beyond}. Maximizing entropy means driving the policy toward the uniform distribution, which is to say moving probability mass away from the few tokens the policy currently favours and onto everything else, as Figure~\ref{statistic_pic}(a) illustrates. In a small action space this is a mild and useful perturbation, but a long-trace language policy is the opposite regime, and two factors compound. The entropy of a whole trajectory accumulates over its steps, so a bonus applied to it acts once per token of the trace, and at each of those steps the bonus pushes hardest on the tokens that are least likely, of which a $\sim$152K-token vocabulary~\citep{yang2025qwen3} contains an overwhelming majority. The pressure the bonus places on tokens carrying negligible probability mass therefore accumulates with both the trace length and the vocabulary size, so a growing share of the exploration budget is spent off the reasoning path. Token-level noise is the wrong instrument here for a second, practical reason: in a short generation an anomalous token is cheap, but in an 8K-token derivation every subsequent step is conditioned on it. What long-horizon reasoning needs is the opposite profile: high diversity \emph{between} candidate solution paths, and low uncertainty \emph{within} each one.

In this paper, we propose Info-GRPO, a framework that reframes exploration as a correlation-aware objective rather than a penalty on confidence. Instead of treating exploration and convergence as opposing forces, Info-GRPO conditions the policy on a latent prior and maximizes a generalized form of the mutual information between them. This relocates diversity from the token distribution to the trajectory distribution: different latent priors induce different reasoning paths, while each conditioned policy is free to become confident, so entropy can still fall along the trace as convergence requires (Figure~\ref{first_pic}). The latent prior is used only during training, and inference runs on the original prompt, so deployment is unchanged. In summary, our contributions are as follows.

\begin{itemize}[leftmargin=*]
\item We show that naive entropy regularization in large vocabularies is structurally sensitive to the massive long-tail of tokens, whose contribution accumulates over every step of a long reasoning trace. By framing entropy regularization as a special case of mutual information, we reorient learning toward conditional entropy reduction, which is consistent with convergence.

\item We introduce Info-GRPO, a correlation-aware training paradigm that augments prompts with latent variables and employs a mutual information objective to correlate distinct reasoning strategies with latent priors. This simple, effective design resolves exploration-convergence tension with a coherent trajectory of entropy reduction.

\item We conduct evaluations across diverse benchmarks and models, demonstrating that Info-GRPO consistently outperforms GRPO baselines. Furthermore, latent-conditioned outputs exhibit distinct and interpretable reasoning patterns, providing a novel pathway for steering and analyzing cognitive strategies in LLMs.
\end{itemize}

\section{Related Work}
\label{sec:related}

\textbf{Reinforcement Learning for Long-Trace Reasoning.}
RL significantly improves LLM reasoning~\citep{o12024,o32025,grok2024,qwq2024,guo2025deepseek} and is now the standard route to chains of thought thousands of tokens long. Foundational methods include RLHF~\cite{ouyang2022training} and Verifiable Rewards (RLVR)~\cite{lambert2024tulu,shao2025spurious}, while the value-model-free GRPO~\cite{shao2024deepseekmath} notably enhances stability and efficiency over PPO~\cite{schulman2017proximal}. Recent variants refine advantage estimation (Dr. GRPO~\cite{liu2025understanding}), stabilize MoE training (GSPO~\cite{zheng2025group}), and introduce curricula (Light-R1~\citep{wen2025light}). Others reshape the clipping and sampling pipeline (DAPO~\cite{yu2025dapo}) or reintroduce a value model for advantage estimation (VAPO~\cite{yue2025vapo}, ORZ~\cite{hu2025open}), while a parallel line optimizes policies offline from pre-collected preferences~\citep{rafailov2023direct,meng2024simpo,ethayarajh2024kto}. A growing body of analysis attributes the gains of RLVR to the emergence of reasoning behaviours rather than to correct answers alone~\cite{gandhi2025cognitive}, and further variants act on the length of the chain of thought, as in S-GRPO~\cite{dai2025s}, which rewards conciseness and early termination. However, GRPO-based approaches fundamentally suffer from insufficient exploration and policy collapse~\cite{chen2025dragrpo}. Our work extends GRPO to concurrently improve exploration and exploitation.

\textbf{Entropy Regularization for RL.}
Entropy regularization mitigates premature convergence and insufficient exploration in RL~\citep{hong2018diversity,cui2025entropy}. The idea descends from the maximum-entropy principle~\citep{ziebart2008maximum,toussaint2009robot} and from deep RL methods that add an entropy term to the objective~\citep{mnih2016asynchronous,haarnoja2018soft}. For LLMs, recent works use entropy to drive reasoning~\cite{wang2025beyond}, construct intermediate feedback~\cite{zheng2025first}, augment advantages for diversity~\citep{chen2025dragrpo,cheng2025reasoning}, and control early convergence~\cite{he2025skywork,wang2025reinforcement,cui2025entropy}. A complementary line shapes the entropy \textit{trajectory} along the trace rather than its level: ETR~\citep{etr2026} rewards a monotonically decreasing uncertainty trend to obtain efficient chains of thought. \textit{However, the uncertainty introduced for exploration often weakens policy confidence, and it accumulates over every step of a long trace. We augment entropy-regularized GRPO with a generalized mutual-information regularizer to ease this diversity-confidence contradiction.}

\textbf{Mutual Information for Structured Exploration.}
MI captures variable dependencies and structures representations~\cite{hjelm2018learning,poole2019variational,wen2020mutual,rakelly2021mutual,chen2024learning}, driving representation disentanglement~\cite{chen2016infogan,zhao2017infovae} and contrastive learning~\cite{chen2020simple,zhang2023On,lee2024soft}. In LLMs, MI controls reasoning by downweighting redundant GRPO completions~\cite{chen2025dragrpo}, analyzing trajectory dynamics~\cite{qian2025demystifying}, and optimizing preference alignment~\cite{xiao2025infopo}, enabling structured exploration rather than purely stochastic diversity. Motivated by this, we maximize MI conditioned on a latent variable, so that latent–policy correlation carries diversity across whole reasoning traces rather than per-token randomness. More discussion of related work is provided in Appendix~\ref{Related}.

\section{Background}

In RLVR we model the LLM as a policy $\pi_{\boldsymbol{\theta}}$: from an initial state $s_0$ (the prompt), at step $t$ the state is the action history $s_t=(s_0,a_0,\dots,a_{t-1})$ and the policy $\pi_{\boldsymbol{\theta}}(\cdot\mid s_t)$ emits a token $a_t$. The sequence $\tau=(a_0,a_1,\dots)$ is the trajectory, \emph{i.e.}, the reasoning trace.

\textbf{Group Relative Policy Optimization}~\citep{shao2024deepseekmath} avoids training a critic by estimating the advantage $\hat{A}$ of a trajectory $\tau$ against a sampled group $\mathcal{T}=\{\tau^i\}_{i=1}^{G}$, and plugging it into a PPO-style clipped objective~\citep{schulman2017proximal}:
\begin{equation}
J_{\text{GRPO}}(\boldsymbol{\theta},\mathcal{T}) = \sum_{\tau \in \mathcal{T}} \sum_{t} \min\Big(r_t(\boldsymbol{\theta}) \hat{A}(\tau),\ \text{clip}\big(r_t(\boldsymbol{\theta}), 1 - \epsilon, 1 + \epsilon\big) \hat{A}(\tau) \Big),
    \label{eq:grpo_objective}
\end{equation}
where $r_t(\boldsymbol{\theta}) = \pi_{\boldsymbol{\theta}}(a_t \mid s_t) / \pi_{\boldsymbol{\theta}_{\text{old}}}(a_t \mid s_t)$ is the probability ratio.

\textbf{Policy entropy} measures the uncertainty of the policy. Crucially for what follows, the entropy of a \emph{trajectory} is an accumulation over its steps:
\begin{equation}
    H(\pi_{\boldsymbol{\theta}}(\mathcal{T}|s_0)) := \sum_{\tau\in \mathcal{T}}  \sum_{t} H(\pi_{\boldsymbol{\theta}}(\cdot|s_t)), \  H(\pi_{\boldsymbol{\theta}}(\cdot|s_t)) = - \sum_{a \in V} \pi_{\boldsymbol{\theta}}(a|s_t) \log \pi_{\boldsymbol{\theta}}(a|s_t),
    \label{eq:shannon_entropy}
\end{equation}
where $V$ is the vocabulary. A high entropy value corresponds to a uniform, uncertain policy that encourages exploration, while a low value signifies a deterministic, confident policy geared towards exploitation. Successful RLVR training reduces this quantity as the model commits to high-quality reasoning paths~\citep{cui2025entropy,fu2025deep}. The pathological case, \emph{entropy collapse}, is a sharp early drop that forecloses exploration~\citep{hong2018diversity,he2025skywork,cheng2025reasoning}, countered by a token-level entropy bonus.

\begin{figure}[!t]
\centering
\includegraphics[width=1.0\textwidth]{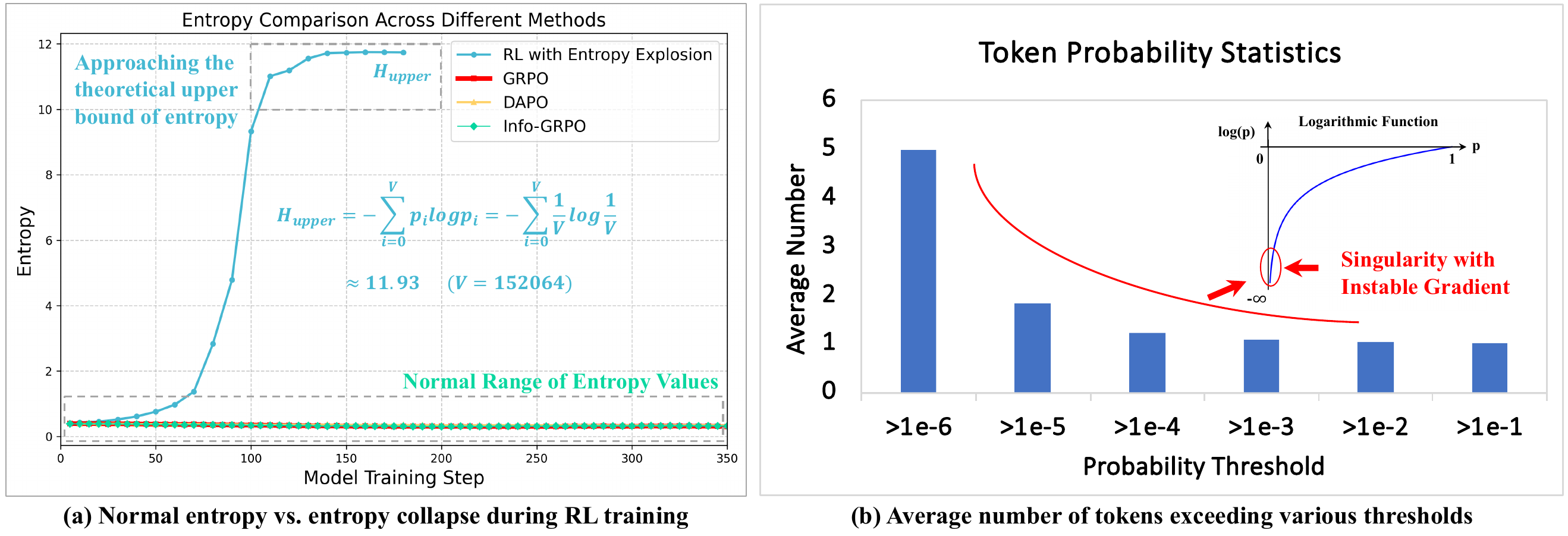}
\caption{
(a) An entropy bonus can degrade the policy into a uniform distribution, reaching its entropy upper bound. (b) This instability is inherent to LLMs, whose vocabularies are dominated by a vast tail of low-probability tokens, on which the entropy objective's logarithmic factor is largest (see Proposition~\ref{thm:main_singularity}). Statistically, in Qwen2.5-7B's 152,064 tokens, fewer than five exceed a $10^{-6}$ probability. This makes direct entropy maximization sensitive to its coefficient at this scale.
}
\label{statistic_pic}
\vspace{-3mm}
\end{figure}

\section{Why Entropy Bonuses Are Fragile at Long Trace Lengths}\label{sec: Singularity Trap}

\textbf{The entropy-regularized objective and its coefficient.}
Exploration is conventionally preserved by adding the bonus of Eq.~(\ref{eq:shannon_entropy}) to the GRPO objective with weight $\beta>0$:
\begin{equation}
    J_{\text{ent}}(\boldsymbol{\theta}) = J_{\text{GRPO}}(\boldsymbol{\theta},\mathcal{T}) + \beta\, H(\pi_{\boldsymbol{\theta}}(\mathcal{T} \mid s_0)).
    \label{eq:entropy_reg}
\end{equation}
Eq.~(\ref{eq:entropy_reg}) localizes the whole exploration--convergence tension in one scalar: the reward term seeks certainty by reducing entropy, the bonus pursues possibility by increasing it, and $\beta$ arbitrates. In practice this knob is hard to set~\citep{cui2025entropy,he2025skywork}: a small $\beta$ barely explores, while a large $\beta$ destabilizes training (Figure~\ref{statistic_pic}(a), at $\beta=0.05$). The sensitivity is structural rather than a matter of tuning.

\begin{proposition}[Tail sensitivity of high-dimensional entropy maximization]
    \label{thm:main_singularity}
    Let $\pi_{\boldsymbol{\theta}}(\cdot\mid s)$ be a policy over a vocabulary $A$ of size $V$, with entropy gradient $\nabla_\theta H(\pi_{\boldsymbol{\theta}}) = - \sum_{a \in A} \nabla_\theta \pi_{\boldsymbol{\theta}}(a\mid s)\cdot(1 + \log \pi_{\boldsymbol{\theta}}(a\mid s))$, in which $(1+\log\pi_{\boldsymbol{\theta}}(a\mid s))$ is the scalar factor contributed by the entropy term alone. Characterizing this factor in probability space, for any threshold $\delta\in(0,1)$ the tail set $A_\delta:=\{a\in A \mid \pi_{\boldsymbol{\theta}}(a\mid s)<\delta\}$ satisfies $|A_\delta| \ge V - 1/\delta$, and the cumulative magnitude of the entropy-specific factors over that tail, the \textbf{Total Tail Instability} (a probability-space diagnostic, see Remark 1), is bounded below by
        \begin{equation}
            \mathrm{TTI} \;=\; \sum_{a \in A_\delta} \big|1 + \log \pi_{\boldsymbol{\theta}}(a\mid s)\big| \;\ge\; \Big(V - \tfrac{1}{\delta}\Big)\,\big|1 + \log \delta\big| ,
            \label{eq:tti}
        \end{equation}
    whenever $V > 1/\delta$, a condition readily met in LLMs.\footnote{Proofs are in Appendix~\ref{Proof} (Theorems~\ref{thm:tail_dominance} and~\ref{thm:gradient_instability}).}
\end{proposition}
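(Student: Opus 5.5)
The bound splits into a counting step and a pointwise step, and the proof will combine them. \textbf{Counting.} Let $B_\delta := A\setminus A_\delta = \{a : \pi_{\boldsymbol{\theta}}(a\mid s)\ge\delta\}$. Since the probabilities are nonnegative and sum to one, a Markov-type argument gives
\[
1 \;\ge\; \sum_{a\in B_\delta}\pi_{\boldsymbol{\theta}}(a\mid s) \;\ge\; \delta\,|B_\delta|,
\]
so $|B_\delta|\le 1/\delta$. Hence $|A_\delta| = V - |B_\delta| \ge V - 1/\delta$. This is positive exactly when $V>1/\delta$, which is the hypothesis of Proposition~\ref{thm:main_singularity}.

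\textbf{Pointwise bound.} For every $a\in A_\delta$ we have $\pi_{\boldsymbol{\theta}}(a\mid s)<\delta$. Since $\log$ is increasing, $1+\log\pi_{\boldsymbol{\theta}}(a\mid s) < 1+\log\delta$. I will first treat the regime $\delta\le 1/e$, which is the regime of interest (the caption of Figure~\ref{statistic_pic} uses $\delta=10^{-6}$). There $1+\log\delta\le 0$, so both quantities are nonpositive and the inequality flips under absolute values:
\[
|1+\log\pi_{\boldsymbol{\theta}}(a\mid s)| \;>\; |1+\log\delta|.
\]
Because $\pi_{\boldsymbol{\theta}}$ is a softmax policy, every probability is strictly positive and every term is finite. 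If zero-probability tokens were allowed, the corresponding terms would be $+\infty$ and the bound would hold trivially.

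\textbf{Combining.} Summing the pointwise bound over $A_\delta$ and using the counting bound gives
\[
\mathrm{TTI} \;\ge\; |A_\delta|\,|1+\log\delta| \;\ge\; \big(V-\tfrac1\delta\big)\,|1+\log\delta|,
\]
which is Eq.~(\ref{eq:tti}).

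\textbf{Main obstacle: the regime $\delta\in(1/e,1)$.} There the pointwise monotonicity fails. A tail token with probability exactly $1/e$ contributes $0$, while $|1+\log\delta|>0$, so the term-by-term comparison breaks down. My first attempt is to isolate the few "middle" tail tokens with $\pi_{\boldsymbol{\theta}}(a\mid s)\in[1/e,\delta)$; by the same counting argument there are at most $e$ of them. I would then show that the remaining tokens in $A_\delta$, which have probability below $1/e$, already contribute enough, using that $|1+\log\delta|<1$ while tokens of small probability contribute large factors. I am not confident this closes in full generality, since the slack depends on $V$ and $\delta$. If it does not, I will state the proposition with the explicit restriction $\delta\le 1/e$, which loses nothing for the LLM regime the statement targets.
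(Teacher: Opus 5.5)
For $\delta\le 1/e$ your proof is correct and matches the paper's argument step for step. You bound the head by normalization ($|A_\delta^c|\le 1/\delta$), use monotonicity of $\log$ with $1+\log\delta\le 0$ to get $|1+\log\pi_{\boldsymbol{\theta}}(a\mid s)|>|1+\log\delta|$ on the tail, and multiply. The obstacle you flag is real, and it is a gap in the paper's proof too. The paper justifies its pointwise step only ``for any small $\delta<1/e$'', yet the proposition is asserted for every $\delta\in(0,1)$.

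Your first attempt cannot close the regime $\delta\in(1/e,1)$ under the sole hypothesis $V>1/\delta$, because the inequality is false there. Take $V=2$ and the softmax policy with logits $0$ and $\log(e-1)$, i.e.\ $\pi_{\boldsymbol{\theta}}=(e^{-1},\,1-e^{-1})$, together with any $\delta\in(\tfrac12,\,1-e^{-1}]$. Then $V>1/\delta$ and $A_\delta=\{a_1\}$, so $\mathrm{TTI}=|1+\log e^{-1}|=0$. The right-hand side, however, is $(2-1/\delta)(1+\log\delta)>0$ (about $0.16$ at $\delta=0.6$).

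So an extra hypothesis is unavoidable, and your fallback restriction $\delta\le 1/e$ is a correct repair. Your splitting idea gives the other natural repair, a mild size condition on $V$. At most two tokens have probability at least $1/e$, so at least $k\ge V-2$ tail tokens have probability below $1/e$ and total mass at most $1$. By Jensen, their contribution is at least $k(\log k-1)\ge (V-2)(\log(V-2)-1)$. This already exceeds $V-1>(V-1/\delta)(1+\log\delta)$ once $V\ge 11$. Under $V\ge 11$, which LLM vocabularies trivially satisfy, the proposition therefore holds for all $\delta\in(0,1)$.
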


\textit{Remark 1.} \textit{TTI measures pressure in probability space}. TTI sums the scalar factor $|1+\log\pi_{\boldsymbol{\theta}}(a\mid s)|$ over the tail. The realized parameter update additionally involves the policy Jacobian and the optimizer. What Eq.~(\ref{eq:tti}) gives is therefore a diagnostic of how the entropy objective \emph{allocates} optimization pressure: the entropy term, alone in an RLVR objective, attaches a factor of magnitude at least $|1+\log\delta|$ to each of at least $V-1/\delta$ tokens that jointly carry vanishing probability mass.

\textit{Remark 2.} \textit{Trace length compounds the effect}. Eq.~(\ref{eq:shannon_entropy}) applies this per-step quantity once for every token of the trace, so for a trajectory of length $T$ the misallocated pressure accumulates as $\mathcal{O}(T\cdot\mathrm{TTI})$, growing with \emph{trace length $\times$ vocabulary size}. Short generations, where entropy bonuses are well behaved, sit at small $T$, while long-form reasoning sits at $T$ in the thousands. A bonus that is benign in the former regime spends a growing share of the exploration budget off the reasoning path in the latter.

\textit{Remark 3.} \textit{The entropy gradient is asymmetrically unstable}. During entropy \emph{maximization} the logarithmic factor amplifies countless near-zero probabilities, pushing the policy toward the uniform distribution (Figure~\ref{statistic_pic}(a)). During entropy \emph{minimization} the same factor is suppressive, driving negligible probabilities toward zero, where the objective is well behaved. Exploration that is obtained by \emph{lowering} a conditional entropy therefore avoids the regime that makes Eq.~(\ref{eq:entropy_reg}) fragile. This asymmetry, rather than the magnitude of any single gradient, is what our method exploits.

\begin{figure}[t]
\centering
\includegraphics[width=0.95\linewidth]{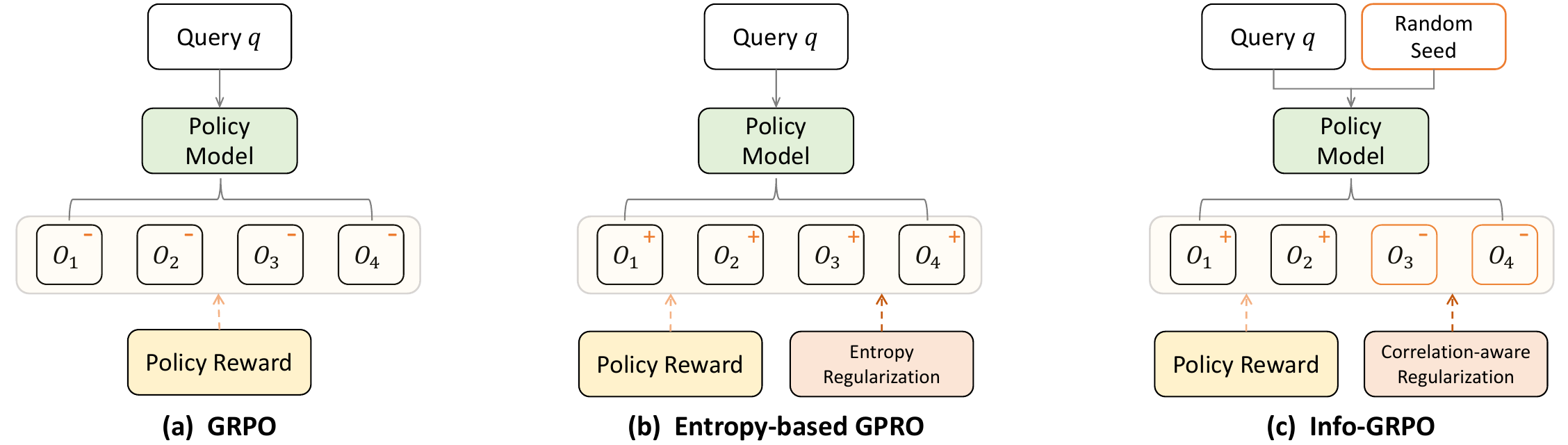}
\caption{An intuitive comparison of exploration methods. Plus (+) and minus (-) signs represent forces that increase or decrease policy entropy. (a) Vanilla GRPO focuses on exploitation (-), often causing policy collapse. (b) Entropy regularization promotes diversity by increasing all entropy (+), but risks collapsing towards a uniform distribution. (c) Info-GRPO balances a mild entropy increase (+) with a strong, seed-conditioned entropy decrease (-). This follows the mutual-information decomposition between seeds and outcomes, using correlation to drive stable exploration.}
\label{fig:algo}
\vspace{-3mm}
\end{figure}

\section{The Info-GRPO Framework}
\label{sec:method_mi}

Motivated by Remark 3, we replace the fragile marginal entropy maximization with its direct information-theoretic extension. This shifts the regularization pressure toward the well-behaved minimization of conditional entropy, mitigating the tail sensitivity identified above while reusing existing computation modules. As shown in Figure~\ref{fig:algo}, conditioning the policy on a latent variable $z \sim p(Z)$ lets the objective balance two competing entropic forces for stable and structured exploration:
\begin{equation}
    \max_{\boldsymbol{\theta}}\ \mathcal{J}_{\text{Info}}(\boldsymbol{\theta}) =  \alpha\, H(\pi_{\boldsymbol{\theta}}(\mathcal{T} \mid s_0)) - H(\pi_{\boldsymbol{\theta}}(\mathcal{T} \mid s_0, z)).
    \label{eq:info_objective}
\end{equation}
The objective is driven by a {consolidation} term, $-H(\pi_{\boldsymbol{\theta}}(\mathcal{T} \mid s_0, z))$, which compels each latent-conditioned policy to converge to a confident strategy. This is counterbalanced by the weighted entropy bonus, in which the \textit{relaxation coefficient} $\alpha \in [0, 1]$ controls how much marginal-entropy pressure is retained: a small positive value anchors exploration in the model's original policy, while values close to $1$ reintroduce the tail sensitivity analyzed in Sec.~\ref{sec: Singularity Trap}.

\textbf{Relation to Mutual Information.}
Our objective in Eq.~(\ref{eq:info_objective}) is derived from the information-theoretic decomposition of the mutual information (MI) between the trajectory $\tau$ and the latent variable $z$,
\begin{equation}
    I(\tau, z \mid s_0) =  \mathbb{E}_{\tau \sim \pi_{\boldsymbol{\theta}}(\cdot|s_0, z),\, z \sim p(Z)} \left[ \log \frac{\pi_{\boldsymbol{\theta}}(\tau, z \mid s_{0})}{p(z)\ \pi_{\boldsymbol{\theta}}(\tau  \mid s_{0}) } \right],
    \label{eq:mi_definition}
\end{equation}
which admits the equivalent form $I(\tau; z \mid s_0) = H(\tau \mid s_0) - H(\tau \mid s_0, z)$. This decomposition, rather than a particular estimator, is the structural principle Info-GRPO retains: the first term maintains diversity at the prompt level, and the second compels each latent-conditioned policy to become confident. Exploration is thereby reframed as maximizing the statistical \textbf{correlation} between latent codes and reasoning paths, cultivating a diverse ensemble of low-entropy policies rather than a single high-entropy one. Diversity now lives between trajectories, not between tokens.

\textbf{Two practical relaxations.}
Eq.~(\ref{eq:info_objective}) coincides with $I(\tau;z\mid s_0)$ at $\alpha=1$ with a seed-averaged marginal, and our implementation departs from that limit in two ways.
\emph{(i) The marginal branch is anchored on unseeded prompts.} Rather than the mixture $p(\tau\mid s_0)=\sum_z p(z)\pi_{\boldsymbol{\theta}}(\tau\mid s_0,z)$, we estimate $H(\tau\mid s_0)$ from trajectories generated \emph{without} a seed. The seed is a training-time device and inference always runs on the original prompt, so the unseeded branch serves as an \emph{inference-aligned anchor}, whereas the mixture would tie it to a distribution never used at test time.
\emph{(ii) The relaxation coefficient satisfies $\alpha<1$.} The marginal term is the tail-sensitive component identified above: $\alpha=1$ restores full pressure to flatten probability mass across the vocabulary and destabilizes training, while an overly small $\alpha$ leaves the anchor too weak. We use $\alpha=0.5$ throughout, and refer to Eq.~(\ref{eq:info_objective}) as a generalized MI regularizer.

\textbf{Implementation.}
Info-GRPO leaves the GRPO loop intact. To match vanilla GRPO's sampling budget ($G=16$), each prompt's generations are split into two equal sub-groups: $\mathcal{T}_{\text{ori}}$ from the base policy $\pi_{\boldsymbol{\theta}}(\cdot\mid s_0)$, and $\mathcal{T}_{\text{aug}}$ from the conditioned policy $\pi_{\boldsymbol{\theta}}(\cdot\mid s_0,z)$, where $z$ is drawn uniformly from a digital set (\emph{e.g.}, $\{1,\dots,10\}$) and appended to $s_0$ by a fixed template (\textit{``Choosing random seed $\{z\}$...''}). Empirically these semantically neutral seeds outperform explicit semantic hints, which impose an \emph{alignment tax} by constraining the reasoning style. The final objective is
\begin{equation}
J_{\text{Info-GRPO}}(\boldsymbol{\theta})  = J_{\text{GRPO}}(\boldsymbol{\theta},\mathcal{T}_{\text{ori}} \cup \mathcal{T}_{\text{aug}}) +\lambda\, \Big(\alpha\, H\big(\pi_{\boldsymbol{\theta}}(\mathcal{T}_{\text{ori}} \mid s_0)\big) - H\big(\pi_{\boldsymbol{\theta}}(\mathcal{T}_{\text{aug}} \mid s_0, z)\big)\Big),
    \label{eq:info-grpo_objective}
\end{equation}
with advantages $\hat{A}(\tau)$ estimated across the merged set $\mathcal{T}_{\text{ori}}\cup\mathcal{T}_{\text{aug}}$, and the two entropy terms estimated by Monte Carlo approximation from $\mathcal{T}_{\text{ori}}$ and $\mathcal{T}_{\text{aug}}$ respectively. Algorithm~\ref{alg_info_grpo} in Appendix~\ref{Details} summarizes the resulting training loop.

\begin{table}[!t]
\centering
\footnotesize
\caption{
Comparisons on multiple backbones and benchmarks ($Avg@8$, \%). `${\dag}$' uses the official model. (\textbf{bold} = best, \underline{underline} = second best). Additional comparisons are provided in Appendix~\ref{Compare}.
}
\label{multi_task_result}
\setlength{\tabcolsep}{3.4pt}
\begin{tabular}{llcccccc}
\toprule
Backbone & Method & AIME24 & AIME25 & AMC & MATH500 & Minerva & \textbf{Average} \\
\midrule
\multirow{7}{*}{\textit{\textbf{Qwen2.5-7B}}} & Base Model & 12.08  & 7.50  & 42.77  & 75.50  & 35.06  & 34.58   \\
&GRPO & 20.42  & 10.83  & 56.48  & \textbf{80.53}  & 37.82  & \underline{41.22}   \\
&DAPO & 18.75  & \underline{11.25}  & 48.95  & 78.13  & 36.53  & 38.72   \\
& {CLIP-Cov} &15.83 & \textbf{12.08} & 51.51 & 79.10 & \underline{38.19} & 39.34  \\
& {KL-Cov} &15.40 & \underline{11.25} & \textbf{59.94} & 74.05 & 35.94 & 39.32  \\
& Skywork-OR1 & \underline{20.83}  & 7.50  & 57.38  & 78.38  & 38.05  & 40.43   \\
& Info-GRPO (Ours) & \textbf{22.08}  & 8.75  & \underline{58.28} & \underline{79.60}  & \textbf{38.37}  & \textbf{41.42}   \\
\midrule
\multirow{8}{*}{\makecell[l]{\textit{\textbf{Qwen2.5-Math-7B}}}} &
Base Model & 11.67  & 8.33  & 48.64  & 82.38  & 36.40  & 37.48   \\
&GRPO & 21.25  & 11.67  & 66.57  & \textbf{86.75}  & \underline{38.97}  & 45.04   \\
&DAPO & 21.25  & 7.92  & \underline{79.52}  & 86.35  & 38.92  & 46.79   \\
&Dr. GRPO ${\dag}$ & \textbf{29.17} & 9.58 & 58.58 & 79.18 & 34.83 & 42.27 \\
&DARS-7B-ET ${\dag}$ & 26.25 & 15.42 & 60.84 & 83.38 & 39.06 & 44.99 \\
&DARS-7B-HW ${\dag}$ & 27.92 & 12.92 & 63.25 & 84.18 & 38.69 & 45.39 \\
&Skywork-OR1 & 22.50  & \underline{13.33}  & \textbf{81.02}  & 84.73  & 38.19 &\underline{47.95} \\
&Info-GRPO (Ours) & \underline{25.00}  & \textbf{15.83}  & 78.46  & \underline{86.63}  & \textbf{39.34}  & \textbf{49.05}   \\
\midrule
\multirow{6}{*}{\makecell[l]{\textit{\textbf{DeepSeek-R1-Distill-Qwen-7B}}}} &
Base Model & 54.58  & 34.17  & 81.63  & 92.03  & 39.89  & 60.46   \\
&GRPO & 57.92  & 34.17  & 82.38  & 93.63  & 44.16  & 62.45   \\
&DAPO & 58.33  & 37.50  & 82.53  & 93.63  & \underline{44.53}  & 63.30   \\
&Light-R1-7B-DS ${\dag}$ & 59.10 & \underline{44.30} & \underline{82.83} & 93.55 & 44.35 & \underline{64.83}    \\
&Skywork-OR1 & \underline{59.58}  & 37.92  & 82.08  & \underline{93.73}  & 44.12  & 63.48   \\
&Info-GRPO (Ours) & \textbf{62.08}  &\textbf{ 45.83}  & \textbf{83.13}  & \textbf{93.98}  & \textbf{44.94}  & \textbf{65.99}  \\
\midrule
\multirow{4}{*}{\makecell[l]{\textit{\textbf{DeepSeek-R1-Distill-Llama-8B}}}} &
Base Model & 50.42 & 32.50 & 79.67 & 85.35 & \underline{37.91} & 57.17\\
&GRPO & \underline{53.75} & 35.42 & \underline{81.02} & 86.23 & 37.22 & 58.73\\
&DAPO & 51.67 & \underline{36.67} & 80.87 & \underline{91.08} & 37.45 & \underline{59.55}\\
&Info-GRPO (Ours) & \textbf{57.92} & \textbf{37.50} & \textbf{83.28} & \textbf{91.45} & \textbf{37.96} & \textbf{61.62}\\
\bottomrule
\end{tabular}
\vspace{-3mm}
\end{table}

\begin{table}[!t]
\centering
\small
\caption{Code benchmark (LiveCodeBench) comparisons using DeepSeek-R1-Distill-Qwen-7B (\%).}
\label{code_result}
\begin{tabular}{lccc}
\toprule
Method & $Pass@1$ & $Pass@8$ & $Avg@8$ \\
\midrule
Base Model & 36.92 & 48.75 & 36.74  \\
GRPO & 37.63 & 50.18 & 38.13 \\
Info-GRPO (Ours) & \textbf{40.86} & \textbf{51.97} & \textbf{40.14}  \\
\bottomrule
\end{tabular}
\vspace{-3mm}
\end{table}

\section{Experiments}\label{EXPERIMENT}

\subsection{Setup and Metrics}
We train four backbones: Qwen2.5-7B~\citep{qwen2.5}, Qwen2.5-Math-7B~\citep{yang2024qwen25math}, DeepSeek-R1-Distill-Qwen-7B and DeepSeek-R1-Distill-Llama-8B~\citep{guo2025deepseek}. For the Qwen2.5 series we use DAPO-Math-17K~\citep{yu2025dapo} (17,917 problems) with a 4,096-token trace budget matching their official configuration. For the DeepSeek-R1-Distill series we use the 48,371-sample set of~\citet{he2025skywork} with an 8,192-token budget, since their reasoning traces are substantially longer. The learning rate is $1e{-}6$ with batch size 128 and $G=16$ rollouts per prompt sampled at temperature 1.0, and we set $\lambda=0.005$ and $\alpha=0.5$ throughout. Every method except those evaluated from officially released checkpoints is retrained by us with the same pipeline and core hyperparameters, on a single node with 8 NVIDIA B200 GPUs. Gradient clipping, loss scaling and the numerical stabilization of log-probabilities are configured identically for every method, including the entropy-regularized baselines, so that the differences we report originate at the objective level rather than from these safeguards. \textbf{Accuracy} is evaluated on AIME24~\citep{li2024numinamath}, AIME25~\citep{balunovic2025matharena}, AMC~\citep{li2024numinamath}, MATH500~\citep{hendrycks2021measuring}, Minerva~\citep{lewkowycz2022solving} and LiveCodeBench~\citep{jain2024livecodebench} (30, 30, 83, 500, 272 and 279 problems), reporting $Pass@1$, $Pass@8$ and $Avg@8$ (the mean of $Pass@1$) over 8 trials at temperature $0.6$~\citep{he2025skywork}. \textbf{Latent--Trajectory Correlation} is the average log-probability of a trace under a given seed, $\text{Corr}(z,\tau\mid s_0)=\tfrac{1}{|\tau|}\sum_t \log \pi(a_t\mid s_0,z,a_{<t})$. \textbf{Semantic Diversity} follows~\citep{chen2025dragrpo}: the average pairwise Euclidean distance between JINA-v2 embeddings~\citep{gunther2023jina} of reasoning paths, aggregated across problems and seeds. Full settings are in Appendix~\ref{Details}.

\subsection{Comparative Experiments}

\textbf{Comparisons on Math Benchmarks across Multiple Backbones.}
In Table~\ref{multi_task_result}, Info-GRPO consistently achieves the best overall performance on both Qwen and Llama backbones, demonstrating robust generalization beyond a single model family. For example, on AIME24, it improves $Avg@8$ over GRPO by 3.75\%, 1.66\%, 4.16\%, and 4.17\% based on four models, respectively. While some methods achieve strong performance on individual benchmarks, \emph{e.g.}, Dr. GRPO on AIME24, they exhibit less consistent behavior across tasks, resulting in weaker overall averages. Info-GRPO and GRPO are nearly tied on Qwen2.5-7B (41.42\% \emph{vs.} 41.22\%), which delimits when our method helps. Correlation-aware exploration converts diversity into accuracy only if the backbone can produce \emph{valid} reasoning paths to diversify over, and on the vanilla Qwen2.5-7B this precondition is weak. The same ceiling appears for the other exploration-driven baseline, Skywork-OR1 (40.43\%). As backbone capacity increases, the overall average over GRPO improves by 4.01\%, 3.54\%, and 2.89\% based on the other three models, respectively. Appendix~\ref{Compare} reports $Pass@1$/$Pass@8$ and a diversity-based analysis of this case.

\textbf{Comparisons on the Code Benchmark.}
Models are trained on a 13.7K-sample dataset~\citep{he2025skywork}. The max token length is also 8K for consistency and efficiency. In Table~\ref{code_result}, Info-GRPO outperforms both the base model and GRPO across all metrics, indicating that the improvements brought by our method extend beyond the mathematics domain and apply to broader reasoning tasks.

\begin{table}[!t]
\centering
\small
\caption{
Comparison with baselines using multiple backbones on AIME24 ($MV@N$, \%).
}
\label{mvn_compare}
\setlength{\tabcolsep}{3.5pt}
\begin{tabular}{llcccccc}
\toprule
Backbone & Samples (N) & 4 & 8 & 16 & 32 & 64 & 128 \\
\midrule
\multirow{2}{*}{\textit{\textbf{Qwen2.5-Math-7B}}}
& GRPO & 23.33 & 16.67 & 16.67 & 23.33 & 23.33 & 26.67 \\
& Info-GRPO (Ours) & 23.33 & 30.00 & 26.67 & 26.67 & 26.67 & 26.67 \\
\midrule
\multirow{2}{*}{\makecell[l]{\textit{\textbf{DeepSeek-R1-Distill-Qwen-7B}}}}
& GRPO & 56.67 & 76.67 & 73.33 & 76.67 & 80.00 & 80.00\\
& Info-GRPO (Ours) & 63.33 & 80.00 & 80.00 & 80.00 & 80.00 & 80.00 \\
\bottomrule
\end{tabular}
\vspace{-3mm}
\end{table}

\textbf{Test-Time Compute under Majority Voting.}
The same structure pays off at inference. Table~\ref{mvn_compare} scales the sampling budget $N$ under majority voting on AIME24. On DeepSeek-R1-Distill-Qwen-7B, Info-GRPO reaches $80.00$ at $N=8$, the accuracy GRPO attains only at $N=64$, an $8\times$ larger budget. On Qwen2.5-Math-7B, GRPO does not reach Info-GRPO's peak of $30.00$ even at $N=128$, although on a 30-problem benchmark that gap is one problem. A diverse but individually confident set of reasoning modes produces candidates that agree sooner, so consensus needs less sampling.

\begin{table}[!t]
\footnotesize
\begin{minipage}[t]{0.31\linewidth}
\centering
\caption{Ablation isolating the vocabulary tail.}
\label{tab_mechanism}
\setlength{\tabcolsep}{3pt}
\begin{tabular}{lc}
\toprule
Entropy scope & AIME24 \\
\midrule
None (GRPO) & 57.92 \\
Full vocab. & 53.23 \\
Head only & 58.13 \\
Seed-aug. MI & \textbf{62.08} \\
\bottomrule
\end{tabular}
\end{minipage}
\hfill
\begin{minipage}[t]{0.32\linewidth}
\centering
\caption{Ablation on the maximum trace budget.}
\label{ablation_length}
\setlength{\tabcolsep}{3pt}
\begin{tabular}{lcc}
\toprule
Budget & AIME24 & AIME25 \\
\midrule
2K & 56.67 & 37.08 \\
3K & 61.25 & 37.92 \\
4K & 60.00 & 40.42 \\
8K & \textbf{62.08} & \textbf{45.83} \\
\bottomrule
\end{tabular}
\end{minipage}
\hfill
\begin{minipage}[t]{0.33\linewidth}
\centering
\caption{Ablation on the regularization coefficient $\lambda$.}
\label{ablation_coef}
\setlength{\tabcolsep}{3pt}
\begin{tabular}{lcc}
\toprule
Coef $\lambda$ & AIME24 & AIME25 \\
\midrule
0.5 & 57.08 & 39.58 \\
0.05 & 58.75 & 40.42 \\
0.005 & \textbf{62.08} & \textbf{45.83} \\
0.002 & \textbf{62.08} & 44.17 \\
\bottomrule
\end{tabular}
\end{minipage}
\vspace{-3mm}
\end{table}

\subsection{Ablation Study}
The ablation tables report $Avg@8$ (\%) on DeepSeek-R1-Distill-Qwen-7B unless stated otherwise.

\textbf{Isolating the Vocabulary Tail.}
We compare three configurations of Eq.~(\ref{eq:entropy_reg}) with only the entropy scope varied: no bonus, the bonus over the full vocabulary, and the bonus restricted to head tokens. Table~\ref{tab_mechanism} shows the predicted pattern: the full-vocabulary bonus degrades performance below the no-entropy baseline ($57.92\rightarrow53.23$), whereas removing the tail from the \emph{same} objective restores it ($58.13$). Since the two variants differ only in whether the tail participates, the tail is the locus of degradation. Info-GRPO reaches $62.08$, well above the tail-masked variant: masking removes an instability, while the MI objective also converts exploration into structured reasoning modes.

\textbf{Ablation on the Trace Budget and the Coefficient $\lambda$.}
Table~\ref{ablation_length} varies the trace budget. Performance improves as the budget grows from 2K to 8K, monotonically on AIME25 ($37.08\rightarrow45.83$) and with the best result at 8K on AIME24: Info-GRPO helps most exactly in the long-trace regime where Remark 2 places the entropy bonus at its most misallocated. Table~\ref{ablation_coef} sweeps $\lambda$: smaller values constrain exploration and improve accuracy, so $\lambda=0.005$ is used for all backbones and benchmarks. Appendix~\ref{Ablation} repeats the grid on disjoint benchmarks and another architecture.

\begin{figure}[t]
\centering
\includegraphics[width=1.0\textwidth]{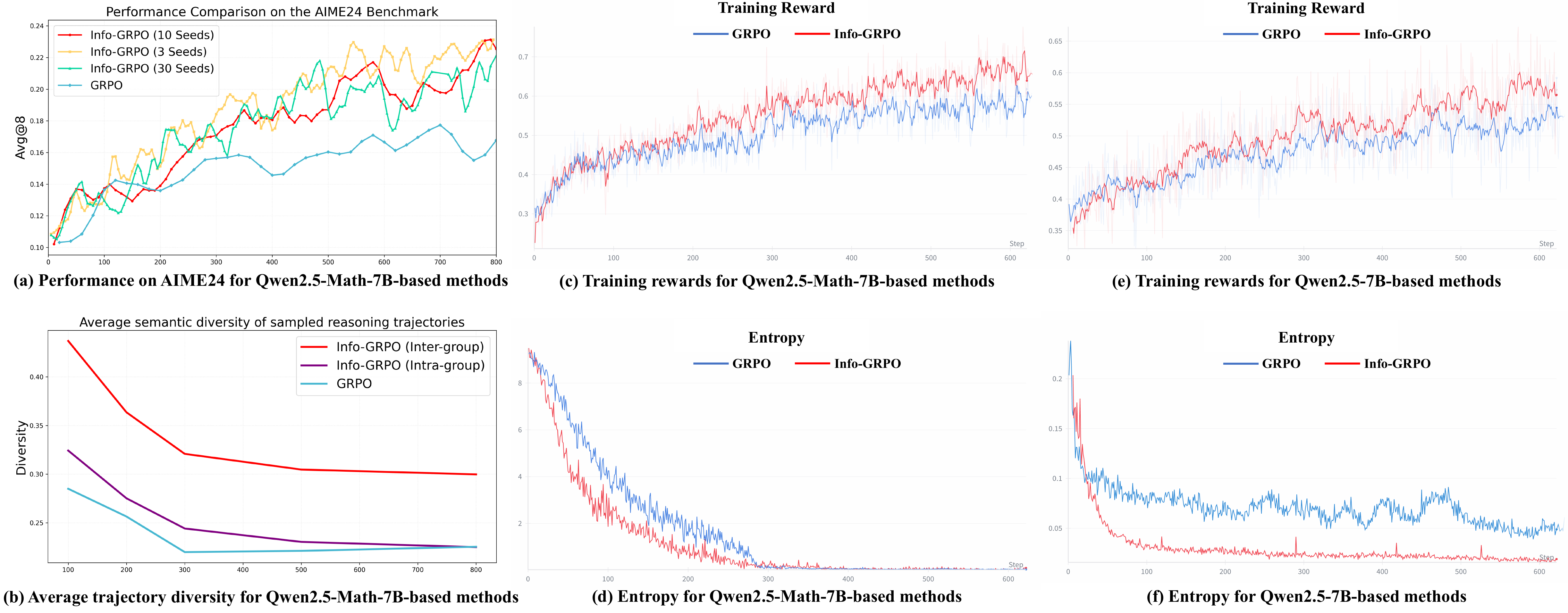}
\caption{
Comprehensive analysis of exploration behavior, policy diversity, and training dynamics.
(a) Validation performance on AIME24 shows that Info-GRPO consistently outperforms GRPO across different numbers of latent seeds, with the performance gain appearing to stabilize around 10 seeds.
(b) Analysis of policy semantic diversity reveals that Info-GRPO (with 10 seeds) maintains high inter-group diversity across conditional seeds and low intra-group diversity within a seed, effectively balancing exploration and exploitation.
(c--f) Info-GRPO exhibits entropy reduction while achieving higher training rewards compared to the GRPO baseline.
}
\label{reward_curve}
\vspace{-3mm}
\end{figure}

\begin{table}[!t]
\footnotesize
\begin{minipage}[t]{0.30\linewidth}
\centering
\caption{Ablation on the relaxation coefficient $\alpha$.}
\label{ablation_alpha}
\setlength{\tabcolsep}{3pt}
\begin{tabular}{lcc}
\toprule
Coef $\alpha$ & AIME24 & AMC \\
\midrule
1.0 (exact MI) & 52.50 & 75.90 \\
0.5 (ours) & \textbf{62.08} & \textbf{83.13} \\
0.05 & 58.33 & 81.93 \\
\bottomrule
\end{tabular}
\end{minipage}
\hfill
\begin{minipage}[t]{0.34\linewidth}
\centering
\caption{Ablation disentangling latent seeds and MI regularization (Qwen2.5-Math-7B).}
\label{ablation_mi}
\setlength{\tabcolsep}{3pt}
\begin{tabular}{lc}
\toprule
Method & AIME24 \\
\midrule
Vanilla GRPO & 45.04 \\
GRPO w/ seeds & 42.23 \\
Info-GRPO & \textbf{49.05} \\
\bottomrule
\end{tabular}
\end{minipage}
\hfill
\begin{minipage}[t]{0.32\linewidth}
\centering
\caption{Ablation on structured latent variables (Qwen2.5-Math-7B).}
\label{ablation_seed_type}
\setlength{\tabcolsep}{3pt}
\begin{tabular}{lcc}
\toprule
Seed type & Diversity & AIME24 \\
\midrule
Semantic & \textbf{0.307} & 23.33 \\
Digital & 0.300 & \textbf{25.00} \\
\bottomrule
\end{tabular}
\end{minipage}
\vspace{-3mm}
\end{table}

\textbf{Ablation on the Relaxation Coefficient $\alpha$.}
Table~\ref{ablation_alpha} shows why the exact-MI setting is not the practical one. At $\alpha=1.0$, where Eq.~(\ref{eq:info_objective}) coincides with mutual information, training is unstable and AIME24 accuracy falls to $52.50$, below vanilla GRPO, exactly as the tail sensitivity of the marginal term predicts. At $\alpha=0.05$ the anchor is too weak. The intermediate $\alpha=0.5$ is best on both and is used throughout.

\textbf{Ablation on Latent Seeds and MI Regularization.}
To disentangle prompt augmentation from the MI objective, we test standard GRPO with seed-augmented prompts (Table~\ref{ablation_mi}). Without MI regularization, appending seeds acts purely as noise, degrading $Avg@8$ from 45.04\% to 42.23\%. The augmented prompt is only a vehicle, and the MI-guided regularizer is what turns the latent variable into a usable control signal. Replacing neutral digital seeds with semantic heuristics (Table~\ref{ablation_seed_type}) marginally increases diversity but degrades efficacy, exposing an \emph{alignment tax}: explicit linguistic constraints push the model off its native reasoning abstractions.

\textbf{Analysis of Latent Seeds and Policy Diversity.}
Varying the seed count (Figure~\ref{reward_curve}(a)), Info-GRPO consistently outperforms GRPO, with gains stabilizing around 10 seeds. Figure~\ref{reward_curve}(b) shows markedly higher inter-group semantic diversity across seeds and lower intra-group diversity within a seed: different latent variables steer the model toward different strategies while it remains confident within each, avoiding the policy collapse of vanilla GRPO. Figure~\ref{first_pic}(c) visualizes the seed--trajectory correlation as a block-diagonal structure, confirming that the policy uses the latent seed. Appendix~\ref{Case} gives qualitative cases in math and code.

\textbf{Analysis of the Training Entropy.}
Figure~\ref{reward_curve}(c--f) compares entropy and training rewards. Info-GRPO shows faster entropy reduction and steeper reward growth, indicating more stable optimization. Suppressing token-level randomness does not suppress exploration: it has been relocated to the trajectory distribution, which is what long-horizon reasoning needs, since per-token noise would let one anomalous token corrupt all that follows.

\Needspace*{6\baselineskip}
\section{Conclusion}

This paper examines the conflict between naive entropy regularization and convergence in long-trace reasoning, and traces it to the tail sensitivity of direct marginal entropy maximization in vast token spaces, a cost paid once per generated token and quantified in probability space by our TTI diagnostic, then verified by a controlled tail-masking ablation. We introduce Info-GRPO, an information-theoretic framework that eases this tension through latent-variable augmentation and a generalized MI-guided regularizer. Experiments show consistent improvements over GRPO baselines across multiple benchmarks and model architectures. Info-GRPO's advantage stems from correlation-aware exploration, as evidenced by seed-conditioned reasoning modes and stable training dynamics.

\section*{Limitations}

\textbf{Trace lengths beyond 8K.} Our longest trace budget is 8,192 tokens, set by the official configurations of the DeepSeek-R1-Distill backbones (4,096 for the Qwen2.5 series). Remark 2 predicts that the pressure an entropy bonus misallocates grows as $\mathcal{O}(T\cdot\mathrm{TTI})$, so the effect motivating Info-GRPO should be at least as pronounced at 32K or 128K trace budgets. We have not verified this: doing so requires retraining at those budgets, which is beyond our compute. The claims in this paper are therefore supported up to 8K.

\textbf{Statistical scope.} We report $Avg@8$, the mean of $Pass@1$ over 8 independent decoding trials, and we do not report a dispersion estimate for it. The informative measurement would repeat the full RL training run with independent random seeds, which is prohibitively expensive at our scale. On 30-problem benchmarks a one-problem difference is worth roughly $3.3$ points, so our conclusions there rest on the consistency of the trend across four backbones, six benchmarks, three metrics, and the larger MATH500 (500), Minerva (272) and LiveCodeBench (279) test sets.

\textbf{What the latent seed contributes.} How much of the gain is attributable to the seed itself remains open. Our ablations separate the MI regularizer from prompt augmentation, but not the effect of varying the seed from that of the conditional-entropy term, which a matched control holding the seed fixed across all augmented prompts would measure. How seeds should be scheduled over training, and whether a seed can be chosen per problem rather than sampled uniformly, are equally unexplored.

\textbf{Latent space complexity.} We rely on a flat, unstructured discrete latent space. As Table~\ref{ablation_seed_type} shows, neutral digital seeds avoid the \emph{alignment tax} incurred by semantic heuristics, but their unstructured nature limits the granularity of control. Hierarchical or continuous latent spaces allowing compositional control over the reasoning process are a substantial standalone problem.

\textbf{Evaluation scope and verifiable rewards.} To isolate the algorithmic effect of the MI objective without confounds from noisy or hackable reward models, evaluation is restricted to domains with strict verifiable rewards, namely mathematics and code. Generalization to multi-step tool use, environments with varying reward sparsity, and fully open-ended reasoning therefore remains untested.

\textbf{Boundary conditions.} As reported in Sec.~\ref{EXPERIMENT} and detailed in Appendix~\ref{Compare}, the method does not help when the backbone cannot produce valid reasoning paths to diversify over: on Qwen2.5-7B, Info-GRPO reaches the highest trajectory diversity of all methods on AIME25 without the highest accuracy. Adaptive modulation of the exploration term for such backbones is left to future work.

{\small
\bibliographystyle{unsrtnat}
\bibliography{info_cite}
}

\newpage
\appendix

\section{Appendix}\label{Supp}

This supplementary material is organized as follows to support the main manuscript:
\begin{itemize}[leftmargin=*]
    \item \textbf{Section~\ref{Proof}:} Additional theoretical proofs and derivations.
    \item \textbf{Section~\ref{Related}:} Extended discussion of related work.
    \item \textbf{Section~\ref{Details}:} More training hyperparameters and implementation details.
    \item \textbf{Section~\ref{Compare}:} Additional comparative results on further metrics, baselines and model series.
    \item \textbf{Section~\ref{Ablation}:} Extended ablation studies complementing those in the main text.
    \item \textbf{Section~\ref{Case}:} Qualitative case analyses demonstrating reasoning diversity.
    \item \textbf{Section~\ref{LLM}:} Disclosure regarding the use of AI assistants in manuscript preparation.
\end{itemize}

\subsection{Proof}\label{Proof}

\setcounter{theorem}{0}
\begin{theorem}[Quantitative Bound of the Tail Set]
\label{thm:tail_dominance}
Let $\pi_\theta(\cdot|s)$ be a policy distribution over a discrete vocabulary $A$ of size $V$. For any probability threshold $\delta \in (0, 1)$, let the tail set $A_\delta$ be defined as the set of tokens whose probability is less than $\delta$:
\begin{equation}
    A_\delta := \{ a \in A \mid \pi_\theta(a|s) < \delta \}.
\end{equation}
The size of this tail set, $|A_\delta|$, is lower-bounded as follows:
\begin{equation}
     |A_\delta| \ge V - \frac{1}{\delta} .
\end{equation}
\end{theorem}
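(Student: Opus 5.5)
The plan is a Markov-type counting argument on the complement of the tail set. Write $A\setminus A_\delta=\{a\in A \mid \pi_\theta(a|s)\ge\delta\}$ for the set of ``head'' tokens. Since $A_\delta$ and its complement partition $A$, we have $|A_\delta| = V - |A\setminus A_\delta|$. The statement therefore reduces to showing that the head set contains at most $1/\delta$ tokens.

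To bound the head, I would use only that $\pi_\theta(\cdot|s)$ is a probability distribution, so its values are nonnegative and sum to one. Dropping the nonnegative tail contributions and lower-bounding each head term by $\delta$ gives
\begin{equation}
1=\sum_{a\in A}\pi_\theta(a|s)\;\ge\;\sum_{a\in A\setminus A_\delta}\pi_\theta(a|s)\;\ge\;\delta\,|A\setminus A_\delta| .
\end{equation}
Hence $|A\setminus A_\delta|\le 1/\delta$. Substituting into the partition identity yields $|A_\delta|\ge V-1/\delta$, which is the claim.

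There is no real obstacle here. The only points to keep straight are the following.
\begin{itemize}
\item The tail is defined with a strict inequality, so the head uses $\ge\delta$; this is exactly the direction needed for the bound.
\item The bound is vacuous when $V\le 1/\delta$. This is consistent with the side condition $V>1/\delta$ imposed in Proposition~\ref{thm:main_singularity}.
\item If an integer statement is desired, one may sharpen the bound to $|A_\delta|\ge V-\lfloor 1/\delta\rfloor$, but this is not needed.
\end{itemize}

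This count is the ingredient feeding the TTI bound in Proposition~\ref{thm:main_singularity}. There, for $a\in A_\delta$ with $\delta\le e^{-1}$, the map $p\mapsto|1+\log p|$ is decreasing on $(0,\delta)$, so each tail term is at least $|1+\log\delta|$. Multiplying by the count just obtained gives Eq.~(\ref{eq:tti}).
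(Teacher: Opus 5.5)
Your proof is correct and matches the paper's argument: bound the head set via $\delta\,|A_\delta^c| \le \sum_{a\in A_\delta^c}\pi_\theta(a|s)\le 1$, using normalization and nonnegativity, and then subtract from $V$. Your side remark is also right: the downstream TTI step needs $\delta\le e^{-1}$ for $p\mapsto|1+\log p|$ to be monotone on the tail, a restriction the paper's Theorem~\ref{thm:gradient_instability} omits from its statement and its proof mentions only in passing.
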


\begin{proof}

\textbf{1. Bounding the Size of the Head.}
First, we partition the vocabulary $A$ into the tail set $A_\delta$ and its complement, the head set $A_\delta^c = A \setminus A_\delta$. By definition, for any token $a \in A_\delta^c$, its probability is bounded below by $\delta$:
\begin{equation}
    \forall a \in A_\delta^c, \quad \pi_\theta(a|s) \ge \delta.
\end{equation}
The total probability mass is constrained by the normalization axiom of probability distributions:
\begin{equation}
    \sum_{a \in A} \pi_\theta(a|s) = 1.
\end{equation}
We can decompose this sum over the head and tail sets:
\begin{equation}
    \sum_{a \in A_\delta^c} \pi_\theta(a|s) + \sum_{a \in A_\delta} \pi_\theta(a|s) = 1.
\end{equation}
Since probabilities are non-negative, $\sum_{a \in A_\delta} \pi_\theta(a|s) \ge 0$. This implies an upper bound on the probability mass contained within the head set:
\begin{equation}
    \sum_{a \in A_\delta^c} \pi_\theta(a|s) \le 1.
\end{equation}
Combining this with the lower bound on individual token probabilities in the head set, we get:
\begin{equation}
    |A_\delta^c| \cdot \delta \le \sum_{a \in A_\delta^c} \pi_\theta(a|s) \le 1.
\end{equation}
From this, we derive a strict upper bound on the size of the head set, $|A_\delta^c|$:
\begin{equation}
    |A_\delta^c| \le \frac{1}{\delta}.
\end{equation}
This result is critical: it shows that the number of {``}high-probability" tokens is independent of the vocabulary size $V$ and is solely limited by the chosen threshold $\delta$.

\textbf{2. Deriving the Lower Bound for the Tail Size.}
The size of the tail set is simply the total vocabulary size minus the size of the head set:
\begin{equation}
    |A_\delta| = V - |A_\delta^c|.
\end{equation}
Substituting our upper bound for $|A_\delta^c|$, we obtain the lower bound for the tail size:
\begin{equation}
    |A_\delta| \ge V - \frac{1}{\delta}.
\end{equation}
This proves the first part of the theorem. As $V \to \infty$, the term $1/\delta$ becomes negligible, and thus $|A_\delta| \approx V$. The vast majority of tokens must lie in the tail.

\end{proof}

\textit{Remark 1.}
    According to our result, for large $V$, the gradient sum is dominated by at least $V - 1/\delta$ such terms. For instance, in an LLM with $V=150,000$ and a small probability threshold of $\delta=10^{-8}$, the head can contain at most $10^8$ tokens (a loose bound), but the tail is guaranteed to contain at least $150,000 - 10^8$, which is nonsensical unless the head is much smaller. A more realistic scenario, if we assume the top 100 tokens hold significant probability mass, we can set $|A_\delta^c| = 100$, implying $\delta \approx 0.01$ at most. Then $|A_\delta| \ge 150,000 - 100 = 149,900$.

\begin{theorem}[Quantitative Bound on the Entropy-Specific Tail Factor]
\label{thm:gradient_instability}
Let $\pi_\theta(\cdot|s)$ be a policy over a vocabulary $A$ of size $V$. The entropy gradient can be decomposed into contributions from a head set $A_\delta^c$ and a tail set $A_\delta = \{a \in A \mid \pi_\theta(a|s) < \delta \}$, for any threshold $\delta \in (0, 1)$:
\begin{equation}
    \nabla_\theta H(\pi_\theta) = \underbrace{- \sum_{a \in A_\delta^c} \nabla_\theta \pi_\theta(a|s) (1 + \log \pi_\theta(a|s))}_{\mathbf{G}_{\text{head}}} \underbrace{- \sum_{a \in A_\delta} \nabla_\theta \pi_\theta(a|s) (1 + \log \pi_\theta(a|s))}_{\mathbf{G}_{\text{tail}}}.
\end{equation}
We are concerned exclusively with the scalar factor $(1 + \log \pi_\theta(a|s))$, which is contributed by the entropy term alone. The per-token policy Jacobian $\nabla_\theta \pi_\theta(a|s)$ lies outside the quantity below. The cumulative magnitude of these logarithmic scaling factors over the tail, which we define as the Total Tail Instability (TTI), is lower-bounded and grows linearly with $V$. Specifically:
\begin{equation}
    \text{TTI} := \sum_{a \in A_\delta} |1 + \log \pi_\theta(a|s)| \ge \left(V - \frac{1}{\delta}\right) |1 + \log \delta|.
\end{equation}
This bound holds provided $V > 1/\delta$, a condition easily met in LLMs.
\end{theorem}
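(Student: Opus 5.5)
The plan is to split the argument into a trivial bookkeeping part and a per-token monotonicity part. The bookkeeping part is the decomposition $\nabla_\theta H = \mathbf{G}_{\text{head}} + \mathbf{G}_{\text{tail}}$. It is immediate: start from $H = -\sum_a \pi_\theta(a|s)\log\pi_\theta(a|s)$ and differentiate term by term to get $-\sum_a \nabla_\theta\pi_\theta(a|s)\,(1+\log\pi_\theta(a|s))$. Since $A_\delta$ and $A_\delta^c$ partition $A$, the sum splits into the two brackets. For a softmax policy every $\pi_\theta(a|s)>0$, so every logarithm is finite and no term is ill-defined.

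For the TTI bound I would argue token by token and then count. Fix $a\in A_\delta$, so $0<\pi_\theta(a|s)<\delta$. Because $\log$ is strictly increasing, $1+\log\pi_\theta(a|s) < 1+\log\delta$. In the regime $\delta\le 1/e$ we have $1+\log\delta\le 0$, so both quantities are nonpositive. Taking absolute values then reverses the inequality:
\begin{equation*}
|1+\log\pi_\theta(a|s)| \;>\; |1+\log\delta| \qquad \text{for every } a\in A_\delta .
\end{equation*}
Every summand in TTI is nonnegative and at least $|1+\log\delta|$, so $\mathrm{TTI}\ge |A_\delta|\,|1+\log\delta|$. Invoking Theorem~\ref{thm:tail_dominance}, $|A_\delta|\ge V-1/\delta$, and this lower bound is positive precisely when $V>1/\delta$, which is the stated proviso. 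Multiplying gives $\mathrm{TTI}\ge (V-1/\delta)\,|1+\log\delta|$, as claimed.

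The main obstacle is the sign step. The function $x\mapsto|1+\log x|$ is not monotone on $(0,1)$: it decreases to $0$ at $x=1/e$ and increases afterwards. So for $\delta\in(1/e,1)$ the per-token inequality can fail, since a tail token with $\pi_\theta(a|s)=1/e$ contributes $0$. I would handle this by stating the result for $\delta\le 1/e$, which is the only regime of interest: the paper uses $\delta\approx 10^{-2}$ down to $10^{-8}$, where $|1+\log\delta|$ is large. For $\delta\in(1/e,1)$ I would try one of two repairs.

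\begin{enumerate}
\item Bound the few offending tokens separately. At most $e<3$ tokens can have probability $\ge 1/e$, so the argument above applies to the rest of the tail, costing only an additive constant in the count.
\item Apply the $\delta\le 1/e$ case with $\delta'=\min(\delta,1/e)$. Since $A_{\delta'}\subseteq A_\delta$, this gives a valid bound of the same form.
\end{enumerate}

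Either way the linear-in-$V$ growth, which is the content used in Remark~2, is unaffected.
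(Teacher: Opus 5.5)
Your core argument is exactly the paper's proof: strict monotonicity of $\log$, the sign flip once $1+\log\delta\le 0$, then the count $|A_\delta|\ge V-1/\delta$ from Theorem~\ref{thm:tail_dominance}. The paper's proof likewise justifies the per-token step only for $\delta<1/e$. Your diagnosis of that step is right, and the situation is worse than you state: on $(1/e,1)$ the bound is false, not merely unproven. Take $V=3$, $\pi_\theta(\cdot|s)$ uniform and $\delta=0.9$. Then $V>1/\delta$ and every token lies in $A_\delta$, yet $\mathrm{TTI}=3(\log 3-1)\approx 0.30$, whereas $(V-1/\delta)(1+\log\delta)\approx 1.69$. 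The same distribution already violates the bound at $\delta=1/2$, where $0.296<0.307$. Restricting the hypothesis to $\delta\le 1/e$, as you propose first, is therefore the only correct fix.

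The two repairs you offer for $(1/e,1)$ cannot recover the statement, and as written they do not deliver what you claim. Repair~2 with $\delta'=1/e$ gives $(V-e)\,|1+\log(1/e)|=0$, a vacuous bound, so the linear-in-$V$ growth is lost, contrary to your last sentence. Repair~1 misidentifies the offending tokens. For $\delta>1/e$ and $p<\delta$, the inequality $|1+\log p|\ge 1+\log\delta$ holds if and only if $p\le e^{-2}/\delta$. It therefore fails on the whole interval $(e^{-2}/\delta,\delta)$, which extends below $1/e$, not only on the at most two tokens with $p\ge 1/e$. Discarding the fewer than $\delta e^{2}$ tokens with $p>e^{-2}/\delta$ yields $\mathrm{TTI}\ge (V-\delta e^{2})(1+\log\delta)$. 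This is strictly weaker than the claimed bound, because $\delta e^{2}>1/\delta$ exactly when $\delta>1/e$.

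If you want a statement linear in $V$ for every $\delta$, use monotonicity in the threshold with a fixed $\delta'<1/e$, for example $\delta'=e^{-2}$. Since $A_{e^{-2}}\subseteq A_\delta$ for $\delta>1/e$, you get $\mathrm{TTI}\ge V-e^{2}$. That bound is valid, but it is a different bound from the one stated.
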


\begin{proof}

\textbf{1. Bounding the Magnitude of Each Term.}
For any token $a \in A_\delta$, by definition, $0 < \pi_\theta(a|s) < \delta$. Since the logarithm is a monotonically increasing function, this implies $\log \pi_\theta(a|s) < \log \delta$. Therefore, for each term in the TTI sum, we can establish a lower bound on its magnitude:
\begin{equation}
    |1 + \log \pi_\theta(a|s)| > |1 + \log \delta|.
\end{equation}
This holds true because for any small $\delta < 1/e \approx 0.36$, the term $(1 + \log \delta)$ is negative, and its magnitude increases as $\delta$ approaches zero.

\textbf{2. Bounding the Number of Terms.}
From Theorem~\ref{thm:tail_dominance}, we have a tight lower bound on the size of the tail set $|A_\delta|$:
\begin{equation}
    |A_\delta| \ge V - \frac{1}{\delta}.
\end{equation}
By combining these two results, we can lower-bound the Total Tail Instability (TTI):

\begin{align}
\text{TTI} = \sum_{a \in A_\delta} |1 + \log \pi_\theta(a|s)| &> \sum_{a \in A_\delta} |1 + \log \delta| \\
&= |A_\delta| \cdot |1 + \log \delta| \\
&\ge \left(V - \frac{1}{\delta}\right) |1 + \log \delta|.
\end{align}
This concludes the proof. The TTI, which aggregates the entropy-specific amplification factor over the tail, has a magnitude that scales at least linearly with the vocabulary size $V$.
\end{proof}

\textbf{Scope of Theorem~\ref{thm:gradient_instability}.}
TTI is a \textbf{probability-space tail-sensitivity diagnostic for the entropy objective}. The omitted Jacobian $\nabla_\theta \pi_\theta(a|s)$ is generally small for tail tokens, and the realized parameter update further depends on the optimizer, on gradient clipping and on numerical stabilization, so a complete characterization of that update would have to model all of these jointly.

What Theorem~\ref{thm:gradient_instability} establishes is the following. The entropy objective, alone among the terms in an RLVR objective, attaches a factor whose magnitude is at least $|1+\log\delta|$ to every one of the at least $V - 1/\delta$ tokens in the tail. The \textit{objective} therefore allocates a share of its optimization pressure to a token set of size $\Theta(V)$ that carries a vanishing share of the probability mass, and this allocation follows from the objective's form rather than from any particular optimizer. This is the structural asymmetry Info-GRPO exploits: the conditional-entropy branch of Eq.~(\ref{eq:info_objective}) drives probabilities toward the well-behaved end of the logarithm, while the marginal branch, which does not, is down-weighted by $\alpha < 1$.

\paragraph{Additional remarks on Proposition~\ref{thm:main_singularity}.}
The following remarks continue Remarks 1--3 of the main text.

\textit{Remark 4.} \textit{A large $V$ makes a dominant tail set inevitable}.
The first claim formalizes that a massive vocabulary forces an extremely sparse distribution, with the number of tail tokens growing linearly in $V$. In LLMs the region where the $\log \pi_\theta(a|s)$ factor is large is therefore not a fringe case but nearly the entire action space, as Figure~\ref{statistic_pic}(b) illustrates on the sampling distribution of Qwen2.5-7B.

\textit{Remark 5.} \textit{A dominant tail set spreads optimization pressure over the vocabulary}.
The second claim shows the cumulative amplification potential from tail tokens grows linearly in $V$. The entropy update therefore aggregates tens of thousands of terms whose scalar factors are large in magnitude, so a growing share of the entropy signal is allocated to tokens carrying almost no probability mass rather than to the few head tokens that determine the reasoning path. This is a statement about how the objective allocates pressure, not a claim that the resulting update necessarily diverges.

\textit{Remark 6.} \textit{The issue is at the objective level, complementary to optimizer-level safeguards}.
Gradient clipping and numerical stabilization (\emph{e.g.}, log-space computation, $\epsilon$-flooring) are indispensable in LLM training, but they operate downstream of the objective: clipping bounds the norm of the aggregated update without changing its direction, and numerical stabilization prevents overflow without removing the objective-level pressure to spread probability mass across the tail. Info-GRPO reshapes the objective itself and is therefore complementary to these safeguards rather than a replacement for them. All experiments in this paper, for both baselines and Info-GRPO, use identical clipping and stabilization settings (Appendix~\ref{Details}), so the differences we report are not attributable to them.

\subsection{Related Work}\label{Related}

\textbf{Reinforcement Learning for LLM Reasoning.}
Reinforcement learning has emerged as a pivotal paradigm for improving the reasoning capabilities of Large Language Models (LLMs)~\citep{o12024,o32025,grok2024,qwq2024,guo2025deepseek}. Early work focused on aligning models with human preferences, typically using Reinforcement Learning from Human Feedback (RLHF)~\cite{ouyang2022training} on top of supervised fine-tuning and parameter-efficient adaptation~\citep{hu2021lora,zhang2026primacy,zhang2026hidden}.
This domain could be categorized into online and offline preference optimization.
Online methods~\citep{schulman2017proximal,williams1992simple,shao2024deepseekmath} generate responses dynamically during training, receiving real-time feedback. In contrast, offline methods~\citep{rafailov2023direct,meng2024simpo,ethayarajh2024kto} optimize policies using pre-collected preference datasets.
Traditional methods like Proximal Policy Optimization (PPO)~\cite{schulman2017proximal} and REINFORCE~\cite{williams1992simple} are computationally expensive and suffer from instability due to the large and discrete action space. A significant advancement is the development of value-model-free methods, such as Group Relative Policy Optimization (GRPO)~\cite{shao2024deepseekmath}. It addresses the instability in PPO by using trajectory-level comparisons instead of value networks, thereby reducing computational costs and enhancing the robustness of the training process.
Reinforcement Learning with Verifiable Rewards (RLVR)~\cite{lambert2024tulu,shao2025spurious} has also emerged as a promising alternative, demonstrating how outcome-based reward signals can enhance reasoning, particularly in domains demanding rigorous logical deduction like mathematics and programming.
A growing understanding in the literature~\cite{gandhi2025cognitive} indicates that the presence of reasoning behaviors, rather than merely correct answers, is a primary driver of performance gains in RLVR.
Recent works such as DRA-GRPO~\cite{chen2025dragrpo} aim to address this by explicitly incorporating semantic diversity into the reward computation.
S-GRPO~\cite{dai2025s} improves the performance by encouraging conciseness and incentivizing early thinking termination.
Dr. GRPO~\cite{liu2025understanding} removes the length and std normalization terms to avoid the optimization bias in GRPO.
Light-R1~\citep{wen2025light} introduces curriculum training with progressively increasing data difficulty combined with multi-stage post-training.
DAPO~\cite{yu2025dapo} proposes four effective techniques, such as clip-higher, dynamic sampling, token-level policy gradient loss, and overlong reward shaping.
VAPO~\cite{yue2025vapo} further integrates the value model by proposing length-adaptive GAE.
ORZ~\cite{hu2025open} also utilizes a value model for advantage estimation with the Monte Carlo estimation.
GRPO has also been extended to jointly train hierarchical multi-agent systems~\citep{hu2026himaecomenablingjointtraining,hu2026oxygent,zhou2026hierarchical}.
Despite the successes of these methods, challenges remain.
Notably, GRPO-based approaches struggle with insufficient exploration and the lack of diversity in generated solutions~\cite{chen2025dragrpo}.
In this work, we focus on extending GRPO to achieve improvements in exploration and exploitation capabilities.

\textbf{Entropy Regularization for Reinforcement Learning.}
Entropy regularization has become an essential technique in RL to address issues such as premature convergence and insufficient exploration~\cite{hong2018diversity}. Early works focused on entropy as a means to encourage exploration in environments with high uncertainty~\cite{mnih2015human,mnih2016asynchronous,haarnoja2017reinforcement,schulman2017equivalence,schulman2017proximal,haarnoja2018soft}.
In particular, the maximum entropy principle~\cite{ziebart2008maximum,toussaint2009robot} has been used to balance reward maximization with policy stochasticity. It has been extended to language model training, where entropy-based terms are introduced into the reward function to enhance the model's exploratory behaviors during reasoning tasks.
Recent works focus on forking tokens, which introduce new reasoning paths to improve the reasoning performance of LLMs when.
Wang et al.~\cite{wang2025beyond} highlight the importance of high-entropy tokens in driving reasoning behavior.
FR3E~\cite{zheng2025first} identifies high-uncertainty decision points in reasoning trajectories and builds intermediate feedback by conducting targeted rollouts.
Other recent advancements in entropy-based exploration strategies, such as diversity-driven exploration~\cite{chen2025dragrpo} and the RL with entropy-augmented advantage~\cite{cheng2025reasoning}, propose solutions by introducing entropy regularization into the advantage function. These methods reinforce exploratory behaviors, allowing LLMs to tackle complex reasoning tasks more effectively.
In addition, Skywork-OR1~\cite{he2025skywork} utilizes the appropriate entropy control to mitigate premature convergence and improve test outcomes.
1-shot RLVR~\cite{wang2025reinforcement} promotes diverse exploration in outputs by adding an entropy loss with a coefficient to enhance model performance.
KL-Cov and Clip-Cov~\citep{cui2025entropy} clip and apply the KL penalty to high-covariance tokens, respectively, to prevent entropy collapse.
While effective at both a macroscopic level (preventing overall policy collapse) and a microscopic level (guiding exploration at individual token choices), it is challenging for entropy-based methods to balance exploration and exploitation, since the uncertainty introduced by entropy to promote exploration may weaken the confidence of the model.
In this paper, we extend entropy-regularized GRPO with mutual information to address the contradiction between diversity and confidence.

\textbf{Mutual Information for Structured Exploration.}
In unsupervised learning, Mutual Information (MI) has been widely used to capture dependencies between random variables and improve the diversity of learned representations~\cite{hjelm2018learning,poole2019variational,wen2020mutual,rakelly2021mutual,chen2024learning}.
The power of MI in learning disentangled representations in unsupervised settings is evident in methods like InfoGAN~\cite{chen2016infogan} and InfoVAE~\cite{zhao2017infovae}.
InfoGAN has disentangled prominent attributes to show its capacity for unsupervised discovery of interpretable concepts.
Similarly, InfoVAE addresses limitations of variational autoencoders by incorporating an explicit mutual information constraint between the latent code and the generated data within its loss function.
MI also underpins self-supervised contrastive learning, a field that employs the InfoNCE loss~\cite{chen2020simple,zhang2023On,lee2024soft} to maximize the similarity between positive sample pairs and minimize the similarity between negative sample pairs by estimating mutual information.
In the context of LLMs, mutual information has also been explored for enhancing model reasoning capabilities by ensuring that reasoning steps are not overly deterministic or constrained. For instance, the submodular mutual information used in GRPO-based methods~\cite{chen2025dragrpo} aims to downweight redundant completions and focus on diverse reasoning outputs.
Qian et al.~\cite{qian2025demystifying} investigate the reasoning trajectory of large reasoning models from the perspective of information theory. They find that the MI between intermediate representations and the answer arrives at peaks corresponding to tokens that indicate reflection or transition.
Moreover, InfoPO~\cite{xiao2025infopo} optimizes the conditional mutual information
between responses and preferences given a prompt to avoid the Bradley-Terry assumption.
In the context of controllability and randomness in generative models, the mutual information approach is particularly useful for balancing control over the model's output while still allowing for enough randomness to explore diverse reasoning strategies.
This motivates the emergence of Info-GRPO in this paper, which explores the combination of unsupervised mutual information maximization and RL techniques.
Through maximizing the mutual information conditioned on a new latent variable, our method improves the quality of reasoning and ensures that LLM can handle more complex and abstract reasoning tasks.

\begin{table}[t]
\centering
\small
\caption{
Training hyperparameters for Info-GRPO and baseline methods.
Unless otherwise noted, models are trained by us using the same infrastructure and default settings.
DAPO, Clip-Cov, and KL-Cov follow the official codebases and configurations.
}
\label{tab_training_details}
\setlength{\tabcolsep}{4pt}
\begin{tabular}{lcccc}
\toprule
\textbf{Hyperparameter}
& \textbf{Info-GRPO}
& \textbf{DAPO}
& \textbf{Clip-Cov}
& \textbf{KL-Cov} \\
\midrule
Training batch size                & 128  & 128  & 256  & 256  \\
Responses per prompt ($G$)         & 16   & 16   & 8    & 8    \\
Max prompt length                  & 2048 & 2048 & 2048 & 2048 \\
Max response length                & 8192 & 8192 & 8192 & 8192 \\
Learning rate ($\eta$)                      & $1\times10^{-6}$ & $1\times10^{-6}$ & $5\times10^{-7}$ & $5\times10^{-7}$ \\
Sampling temperature               & 1.0   & 1.0 & 1.0 & 1.0 \\
\midrule
Clipping ratio (low / high)        & 0.2 / 0.2 & 0.2 / 0.28 & 1.0 / 1.0 & 0.2 / 0.2 \\
Covariance clip ratio              & --    & -- & $2\times10^{-4}$ & $2\times10^{-3}$ \\
Covariance bounds                  & --    & -- & 1.0 / 5.0 & -- \\
\midrule
Enable filter groups & No & True & True  & True  \\
Overlong buffer enabled            & No    & Yes & No  & No  \\
Overlong buffer length             & --    & 4096 & 2048 & 2048 \\
Overlong penalty factor            & --    & 1.0 & 1.0 & 1.0 \\
\bottomrule
\end{tabular}
\end{table}

\subsection{{Full Experimental Settings}}\label{Details}

\begin{algorithm}[t]
\caption{The Info-GRPO Training Algorithm}
\label{alg_info_grpo}
\begin{algorithmic}[1]
\STATE \textbf{Input:} Initial policy $\pi_{\boldsymbol{\theta}}$, dataset $\mathcal{D}$, learning rate $\eta$.
\STATE \textbf{Initialize:} Policy parameters $\boldsymbol{\theta}$.
\FOR{each training iteration}
    \STATE Sample initial states $\{s_0^{(i)}\} \subset \mathcal{D}$ and latents $\{z^{(i)}\} \sim p(Z)$.
    \STATE Sample trajectories: $\mathcal{T}_{\text{ori}} = \{\pi_{\boldsymbol{\theta}}(\cdot \mid s_0^{(i)})\}$ and $\mathcal{T}_{\text{aug}} = \{\pi_{\boldsymbol{\theta}}(\cdot \mid s_0^{(i)},z^{(i)})\}$.
    \STATE Calculate $J_{\text{Info-GRPO}}(\boldsymbol{\theta})$ per Eq.~(\ref{eq:info-grpo_objective}).
    \STATE Update $\boldsymbol{\theta}$: $\boldsymbol{\theta} \leftarrow \boldsymbol{\theta} + \eta \nabla_{\boldsymbol{\theta}} J_{\text{Info-GRPO}}$.
\ENDFOR
\STATE \textbf{Return:} Optimized parameters $\boldsymbol{\theta}$.
\end{algorithmic}
\end{algorithm}

Below, we introduce more training hyperparameters across different methods.
In each rollout step, $G=16$ responses are sampled per prompt with a temperature of 1.0.
All models are trained on a single node with 8 NVIDIA B200 GPUs.
In addition to our proposed Info-GRPO, we include Dr. GRPO~\citep{liu2025understanding}, DARS~\citep{yang2025depth}, Light-R1~\citep{wen2025light}, Skywork-OR1~\citep{he2025skywork},  DAPO~\citep{yu2025dapo}, KL-Cov and Clip-Cov~\citep{cui2025entropy}, as well as DAPO~\citep{yu2025dapo}.
Unless otherwise specified, all models are trained by us using the same training pipeline and core hyperparameter settings as Info-GRPO, ensuring a fair comparison.
In particular, gradient clipping, loss scaling and the numerical stabilization of log-probabilities are configured identically for every method, including the entropy-regularized baselines. These safeguards therefore act identically across all methods, and the differences we report originate at the objective level.
In Table~\ref{multi_task_result}, every method except those marked ${\dag}$ (Dr. GRPO, DARS-7B-ET, DARS-7B-HW and Light-R1-7B-DS) was trained from scratch by us under this unified pipeline on identical hardware. The ${\dag}$ methods release checkpoints and are evaluated as published.
The target entropy is set to 0.2 for Skywork-OR1~\citep{he2025skywork}, while the clipping parameters for DAPO~\citep{yu2025dapo} are $\epsilon_{low}=0.2$ and $\epsilon_{high}=0.28$.
For KL-Cov, Clip-Cov, and DAPO, we follow the official implementations and configurations released by the respective authors.
Moreover, Dr. GRPO~\citep{liu2025understanding}, DARS~\citep{yang2025depth}, and Light-R1~\citep{wen2025light} directly reuse the officially released checkpoints for evaluation and are therefore not retrained.
Table~\ref{tab_training_details} summarizes the key training hyperparameters for the methods.
While Info-GRPO adopts a unified setup with mutual information regularization, KL-Cov and Clip-Cov differ mainly in their covariance-based clipping strategies and batch configurations, and DAPO further introduces asymmetric clipping and overlong response buffering.
All results are obtained under the best-performing official or default settings.

\begin{table}[!t]
\centering
\small
\caption{
Comparison of methods on multiple backbones and benchmarks ($Pass@1$, \%). `${\dag}$' indicates using the official model.
}
\label{multi_task_result_p1}
\setlength{\tabcolsep}{3.5pt}
\begin{tabular}{llcccccc}
\toprule
Backbone & Method & AIME24 & AIME25 & AMC & MATH500 & Minerva & \textbf{Average} \\
\midrule
\multirow{7}{*}{\textit{\textbf{Qwen2.5-7B}}}
& Base Model & 13.33  & 6.67  & 40.96  & 75.40  & 35.66  & 34.40  \\
& GRPO & \underline{16.67}  & \underline{13.33}  & \underline{57.83}  & 77.60  & 37.50  & 40.59  \\
& DAPO & 13.33  & \textbf{16.67}  & 51.81  & 78.60  & 35.66  & 39.21  \\
& CLIP-Cov & \textbf{20.00}  & \underline{13.33}  & 51.81  & \textbf{79.60}  & \underline{38.24}  & \underline{40.60}  \\
& KL-Cov & \underline{16.67}  & 10.00  & \underline{57.83 } & 74.20  & 34.56  & 38.65  \\
& Skywork-OR1 & \underline{16.67}  & 10.00  & \underline{57.83}  & 77.80  & \underline{38.24}  & 40.11  \\
& Info-GRPO (Ours) & \underline{16.67}  & \underline{13.33}  & \textbf{59.04}  & \underline{79.40}  & \textbf{40.07}  & \textbf{41.70}  \\
\midrule
\multirow{6}{*}{\makecell[l]{\textit{\textbf{Qwen2.5-}}\\\textit{\textbf{Math-7B}}}}
& Base Model & 13.33  & 6.67  & 49.40  & 82.20  & 35.29  & 37.38  \\
& GRPO & 23.33  & 10.00  & 69.88  & \textbf{86.80}  & \underline{37.13}  & 45.43  \\
& DAPO & \underline{26.67}  & 6.67  & 77.11  & \underline{86.60}  & 36.03  & 46.62  \\
& Dr. GRPO ${\dag}$  & \textbf{30.00}  & \underline{13.33}  & 55.42  & 79.60  & 34.19  & 42.51  \\
& Skywork-OR1 & \underline{26.67}  & \textbf{16.67}  & \textbf{80.72}  & 84.20  & \underline{37.13}  & \underline{49.08} \\
& Info-GRPO (Ours) & \textbf{30.00}  & \textbf{16.67}  & \underline{78.31}  & \textbf{86.80}  & \textbf{38.97}  & \textbf{50.15}  \\
\midrule
\multirow{6}{*}{\makecell[l]{\textit{\textbf{DeepSeek-R1}}\\\textit{\textbf{-Distill-Qwen}}\\\textit{\textbf{-7B}}}}
& Base Model & 53.33  & 33.33  & 79.52  & 92.22  & 39.34  & 59.55  \\
& GRPO & \textbf{66.67}  & 40.00  & 79.52  & \underline{95.00}  & 42.65  & 64.77  \\
& DAPO & \underline{63.33}  & 36.67  & 80.72  & 93.00  & 44.12  & 63.57  \\
& Light-R1-7B-DS ${\dag}$ & \underline{63.33}  & \underline{43.33}  & \underline{81.93}  & 93.20  & \underline{45.59}  & \underline{65.48}  \\
& Skywork-OR1 & \underline{63.33}  & 40.00  & \underline{81.93}  & \textbf{95.20}  & 42.28  & 64.55  \\
& Info-GRPO (Ours) & 56.67  & \textbf{50.00}  & \textbf{84.34}  & 94.80  & \textbf{46.32}  & \textbf{66.43}  \\
\midrule
\multirow{4}{*}{\makecell[l]{\textit{\textbf{{DeepSeek-R1}}}\\\textit{\textbf{{-Distill-Llama}}}\\\textit{\textbf{{-8B}}}}}
& Base Model & 46.67  & 26.67  & 77.11  & 85.40  & 36.76  & 54.52  \\
& GRPO & 46.67  & 36.67  & 80.72  & 84.40  & \textbf{38.97}  & 57.49  \\
& DAPO & \underline{56.67}  & \underline{40.00}  & \underline{83.13}  & \textbf{92.00}  & 37.87  & \underline{61.93}  \\
& Info-GRPO (Ours) & \textbf{60.00}  & \textbf{46.67}  & \textbf{84.34}  & \underline{91.60}  & \underline{38.60}  & \textbf{64.24} \\
\bottomrule
\end{tabular}
\end{table}

\begin{table}[!t]
\centering
\small
\caption{
Comparison of methods on multiple backbones and benchmarks ($Pass@8$, \%). `${\dag}$' indicates using the official model.
}
\label{multi_task_result_p8}
\setlength{\tabcolsep}{3.5pt}
\begin{tabular}{llcccccc}
\toprule
Backbone & Method & AIME24 & AIME25 & AMC & MATH500 & Minerva & \textbf{Average} \\
\midrule
\multirow{7}{*}{\textit{\textbf{Qwen2.5-7B}}}
& Base Model & 30.00  & 20.00  & 67.47  & 88.80  & 47.79  & 50.81  \\
& GRPO & 30.00  & \underline{23.33}  & 73.49  & 88.80  & 47.79  & 52.68  \\
& DAPO & \underline{33.33}  & \underline{23.33}  & 67.47  & \textbf{90.60}  & \underline{50.74}  & \underline{53.09}  \\
& CLIP-Cov & 30.00  & 20.00  & 71.08  & \underline{90.20}  & \textbf{51.10}  & 52.48  \\
& KL-Cov & 30.00  & \underline{23.33}  & \underline{74.70}  & 83.60  & 48.53  & 52.03  \\
& Skywork-OR1 & \underline{33.33}  & 20.00  & \underline{74.70}  & 87.60  & 48.90  & 52.91  \\
& Info-GRPO (Ours) & \textbf{36.67}  & \textbf{26.67}  & \textbf{77.11}  & 89.00  & 48.90  & \textbf{55.67}  \\
\midrule
\multirow{6}{*}{\makecell[l]{\textit{\textbf{Qwen2.5-}}\\\textit{\textbf{Math-7B}}}}
& Base Model & 26.67  & 20.00  & 75.90  & \underline{92.80}  & 45.22  & 52.12  \\
& GRPO & \underline{43.33}  & \underline{23.33}  & 79.52  & 92.40  & 47.43  & \underline{57.20}  \\
& DAPO & 30.00  & \textbf{26.67}  & \underline{86.75}  & \textbf{93.40}  & 48.16  & 57.00  \\
& Dr. GRPO ${\dag}$ & \textbf{46.67}  & \underline{23.33}  & 74.70  & 88.40  & 48.16  & 56.25  \\
& Skywork-OR1 & 33.33  & \underline{23.33}  & \textbf{87.95}  & 90.40  & \underline{48.90}  & 56.78  \\
& Info-GRPO (Ours) & 36.67  & \textbf{26.67}  & 84.34  & \textbf{93.40}  & \textbf{49.26}  & \textbf{58.07}  \\
\midrule
\multirow{6}{*}{\makecell[l]{\textit{\textbf{DeepSeek-R1}}\\\textit{\textbf{-Distill-Qwen}}\\\textit{\textbf{-7B}}}}
& Base Model & 80.00  & 56.67  & 91.57  & 98.00  & 53.68  & 75.98  \\
& GRPO & \underline{83.33}  & 60.00  & \textbf{95.18} & 98.20  & 56.62  & 78.67  \\
& DAPO & \underline{83.33}  & 63.33  & 92.77  & 98.20  & \underline{56.99}  & 78.92  \\
& Light-R1-7B-DS ${\dag}$ & \underline{83.33}  & \underline{66.67}  & 90.36  & \textbf{99.20}  & 55.88  & \underline{79.09}  \\
& Skywork-OR1 & 80.00  & 63.33  & \underline{93.98}  & \underline{99.00}  & 56.25  & 78.51  \\
& Info-GRPO (Ours) & \textbf{86.67}  & \textbf{70.00}  & \textbf{95.18}  & 98.40  & \textbf{57.35}  & \textbf{81.52}  \\
\midrule
\multirow{4}{*}{\makecell[l]{\textit{\textbf{{DeepSeek-R1}}}\\\textit{\textbf{{-Distill-Llama}}}\\\textit{\textbf{{-8B}}}}}
& Base Model & 80.00  & 53.33  & 92.77  & 97.20  & 53.31  & 75.32  \\
& GRPO & \underline{83.33}  & \underline{56.67}  & \underline{95.18}  & \underline{97.60}  & \underline{54.41}  & 77.44  \\
& DAPO & \underline{83.33}  & \textbf{60.00}  & 93.98  & 97.40  & 54.04  & \underline{77.75}  \\
& Info-GRPO (Ours) & \textbf{86.67}  & \textbf{60.00}  & \textbf{96.39}  & \textbf{98.20}  & \textbf{56.99} & \textbf{79.65} \\
\bottomrule
\end{tabular}
\end{table}

\subsection{{Comparative Experiments}}\label{Compare}

More comparisons and analyses are provided as follows:

\begin{table}[t]
\centering
\small
\caption{Comparison with ETR~\citep{etr2026} on DeepSeek-R1-Distill-Qwen-7B ($Pass@1$, \%), under an identical decoding protocol.}
\label{tab_etr}
\begin{tabular}{lcccc}
\toprule
Method & AIME24 & AMC23 & MATH500 & \textbf{Avg.} \\
\midrule
ETR & \textbf{56.7} & \textbf{87.5} & 90.6 & 78.3 \\
Info-GRPO (Ours) & \textbf{56.7} & 84.3 & \textbf{94.8} & \textbf{78.6} \\
\bottomrule
\end{tabular}
\end{table}

\paragraph{Comparison with trajectory-shaping entropy methods.}
ETR~\citep{etr2026} shapes the \emph{trend} of uncertainty over a chain of thought rather than its level, rewarding a monotonically decreasing trajectory. Info-GRPO exhibits the same downward trend in Figure~\ref{reward_curve}(c--f) without an explicit trajectory penalty, since the conditional-entropy term supplies that pressure implicitly. Table~\ref{tab_etr} compares the two under an identical decoding protocol: they tie on AIME24, ETR is stronger on AMC23, and Info-GRPO is stronger on the much larger MATH500 ($94.8$ \emph{vs.} $90.6$), giving a comparable overall average. The two mechanisms therefore appear complementary.

\paragraph{Comparison on $Pass@1$ and $Pass@8$.}
Table~\ref{multi_task_result_p1} and Table~\ref{multi_task_result_p8} further report $Pass@1$ and $Pass@8$ results across multiple benchmarks and backbones to complement the main $Avg@8$ evaluation.
It could be seen that Info-GRPO consistently achieves the best average performance based on all base models.
For instance, for $Pass@1$, it improves over GRPO by 4.72\% and 6.75\% on average based on Qwen2.5-Math-7B and DeepSeek-R1-Distill-Llama-8B, respectively.
On AIME24, the results show weaker performance on strong distilled backbones such as DeepSeek-R1-Distill-Qwen-7B, where single-attempt success is already near saturation.
For $Pass@8$, it improves over GRPO by 2.99\% and 2.85\% on average based on Qwen2.5-7B and DeepSeek-R1-Distill-Qwen-7B, respectively. It could be observed that Info-GRPO performs competitively across most settings, with minor exceptions.
Overall, these results suggest that Info-GRPO primarily enhances the quality and stability of the reasoning distribution under multi-sample evaluation, yielding consistent gains across models and metrics, while its advantage on single-attempt success naturally diminishes when the base model performance is already saturated.

\paragraph{Comparison on $MV@N$.}
We also evaluate test-time compute scaling via Majority Voting. We conducted experiments on the AIME24 benchmark using both the Qwen2.5-Math-7B and DeepSeek-R1-Distill-Qwen-7B backbones. We evaluated the vanilla GRPO baseline and our Info-GRPO model across increasing compute budgets ($N \in \{4, 8, 16, 32, 64, 128\}$ samples per question) and applied Majority Voting (MV@N), reported in Table~\ref{mvn_compare} of the main text.
Info-GRPO exhibits significantly higher overall computational efficiency, reaching consensus and peak performance much earlier than the baseline.
On Qwen2.5-Math-7B, it hits peak accuracy (30.0) at N=8, while GRPO does not reach that level even at N=128, though on a 30-problem benchmark that gap is one problem.
On DeepSeek-R1-Distill-Qwen-7B, Info-GRPO hits 80.0 at N=8 \textit{vs.} GRPO at N=64.
This is consistent with Info-GRPO learning a diverse yet individually confident set of seed-conditioned reasoning modes during training, which reduces the test-time compute required to reach consensus on the correct answer.

\paragraph{Clarification of Comparison Methods using Official Configurations or Models.}
In Table~\ref{multi_task_result}, we further compare Info-GRPO with several recent methods using officially released codes or models under identical evaluation settings.

\begin{itemize}[leftmargin=*]
\item \textit{Qwen2.5-7B setting.} For KL-Cov and Clip-Cov~\citep{cui2025entropy}, which release code but not pretrained models, we train them using the official implementations and report their best converged performance.
Info-GRPO achieves superior results on most benchmarks, demonstrating consistent gains beyond entropy-based baselines.

\item \textit{Qwen2.5-Math-7B setting.} We compare with Dr. GRPO~\citep{liu2025understanding} by using its official model.
Although Dr. GRPO achieves the highest scores on AIME24, it performs substantially worse on other datasets, resulting in a much lower overall average compared to Info-GRPO, and this phenomenon is consistent across all three metrics.

\item \textit{DeepSeek-R1-Distill-Qwen-7B setting.} Light-R1~\citep{wen2025light} with the publicly available model weight improves over GRPO and DAPO, but Info-GRPO still achieves the best mean performance, indicating stronger overall robustness.

\end{itemize}

Overall, these comparisons highlight that while some recent methods excel on specific benchmarks, Info-GRPO delivers more stable and consistent improvements across tasks and model families.

\paragraph{Comparisons on More Model Series.}
To further validate scalability, we evaluate GRPO, DAPO, and Info-GRPO on DeepSeek-R1-Distill-Llama-8B under identical training settings.
As shown in Table~\ref{multi_task_result}, Table~\ref{multi_task_result_p1}, and Table~\ref{multi_task_result_p8}, Info-GRPO consistently achieves the best performance across multiple benchmarks and metrics, confirming that its benefits generalize beyond the Qwen family to other backbone architectures.

\begin{table}[t]
\centering
\small
\caption{Performance and diversity comparison on AIME25 using the Qwen2.5-7B base model.}
\label{failure_result}
\begin{tabular}{lcc}
\toprule
Method & $Avg@8$ & Diversity (Euclidean distance)\\
\midrule
GRPO & 10.83 & 0.206 \\
DAPO & 11.25 & 0.190 \\
Clip-Cov & 12.08 & 0.217 \\
KL-Cov & 11.25 & 0.205 \\
Skywork-OR1 & 7.50  & 0.239\\
Info-GRPO (Ours) & 8.75 & 0.287  \\
\bottomrule
\end{tabular}
\vspace{-3mm}
\end{table}

\paragraph{Failure Analysis.}
For the underperformance of our method on AIME25 in Table~\ref{multi_task_result}, we investigate the underlying causes as follows.

\textit{Deeper analysis.}
AIME25 is a highly demanding mathematical reasoning benchmark, where the headroom for diversity-based improvements is fundamentally constrained by the reasoning capacity of the base model. Although Entropy-regularized methods such as Skywork-OR1 and Info-GRPO generally benefit from increased sample diversity, the effectiveness of diversity depends critically on the quality of the candidate reasoning trajectories that the base model can generate.
\textit{When the underlying model’s capability is relatively limited, promoting diversity tends to increase the likelihood of generating low-quality or unstable reasoning paths, which can reduce the probability that any of the eight sampled responses is correct.}
This hypothesis is further supported by the performance of another recent entropy-regularized method, Skywork-OR1. Although it improves diversity, its performance (7.5) is even lower than Info-GRPO and all other baselines. This serves as an additional indicator that diversity alone does not guarantee improved reasoning accuracy when base-model ability is the limiting factor.

\textit{Empirical evidence.}
Prior works typically measure diversity using token-level entropy, which cannot capture the semantic similarity between whole reasoning trajectories. Following the practice in DRA-GRPO~\citep{chen2025dragrpo}, we adopt an embedding-based sequence-level diversity with the average Euclidean distance from pairwise embeddings.
Table~\ref{failure_result} illustrates that Info-GRPO achieves the highest diversity ceiling when averaging over all latent seeds. However, higher diversity does not correlate with higher accuracy on AIME25. Even at its diversity peak, Info-GRPO reaches 8.75, which is higher than Skywork-OR1 but remains below GRPO, DAPO, and Clip-Cov. This pattern is fully consistent with our hypothesis: \textit{When the base model struggles to produce stable reasoning trajectories, adding diversity amplifies variance more than it improves correctness.}

\subsection{{Ablation Study}}\label{Ablation}

The following results extend the ablation study of the main text.

\begin{table}[t]
\centering
\small
\caption{Cross-benchmark ablation on $\lambda$ using DeepSeek-R1-Distill-Qwen-7B ($Avg@8$, \%), on benchmarks disjoint from the headline AIME sets.}
\label{ablation_coef_cross}
\begin{tabular}{lcc}
\toprule
Coef $\lambda$ & AMC & Minerva \\
\midrule
0.5 & 79.52 & 39.71 \\
0.05 & 80.72 & 41.91 \\
0.005 (Used) & \textbf{83.13} & \textbf{44.94} \\
\bottomrule
\end{tabular}
\end{table}

\begin{table}[t]
\centering
\caption{
Ablation on coefficient $\lambda$ using DeepSeek-R1-Distill-Llama-8B (\%).
}
\label{ablation_coef_llama}
\begin{tabular}{ccccccc}
\toprule
\multirow{2}{*}{Coef $\lambda$} & \multicolumn{3}{c}{AIME24} & \multicolumn{3}{c}{AIME25} \\
\cmidrule{2-7}
 & $Pass@1$ & $Avg@8$ & $Pass@8$ & $Pass@1$ & $Avg@8$ & $Pass@8$ \\
\midrule
0.5 & 46.67 & 52.92 & 76.67 & 30.00 & 33.75 & 53.33\\
0.05 & 53.33 & 55.83 & 83.33 & 33.33 & 34.58 & 53.33\\
0.005 & \textbf{60.00} & \textbf{57.92} & \textbf{86.67} & \textbf{46.67} & \textbf{37.50} & \textbf{60.00}\\
\bottomrule
\end{tabular}
\end{table}

\paragraph{Ablation on Coefficient $\lambda$.}
Table~\ref{ablation_coef} of the main text shows a clear advantage for smaller values of $\lambda$ on DeepSeek-R1-Distill-Qwen-7B: lower coefficients such as $0.005$ and $0.002$ constrain exploration and consistently enhance generation quality and stability, which is why $0.005$ is selected as the default setting. This choice is not made on the headline test sets alone: the same ordering of $\lambda$ holds on AMC and Minerva, which are disjoint from AIME24/25 (Table~\ref{ablation_coef_cross}), and on a different architecture (Table~\ref{ablation_coef_llama}), whose trend closely aligns with the Qwen-based dynamics and also peaks at $0.005$. A single value of $\lambda$ is then shared by all four backbones and all six benchmarks, with no per-benchmark tuning.

\paragraph{Ablation on the Trace Budget.}
Table~\ref{ablation_length} of the main text varies the maximum response length on DeepSeek-R1-Distill-Qwen-7B. Increasing the budget from 2K to 8K tokens leads to consistent gains on both AIME benchmarks, indicating that a more generous length is important for complex reasoning tasks, as it provides the model with sufficient capacity to elaborate on logical steps and computations, while a constrained length can truncate complete reasoning chains.
The maximum response length is a backbone-dependent architectural constraint rather than a tuned hyperparameter: it is fixed at 4,096 tokens for the Qwen2.5 series to match their official configurations and at 8,192 tokens for the DeepSeek-R1-Distill series, whose reasoning traces are substantially longer, and the same limit is applied identically to every method trained on a given backbone. Pilot scaling runs beyond 8K yielded saturated performance at sharply higher inference cost, which is why 8K is used for the distilled models.

\begin{table}[t]
\centering
\small
\caption{
Ablation on the number of latent seeds (\%).
}
\label{ablation_seed}
\setlength{\tabcolsep}{4pt}
\begin{tabular}{lcccc}
\toprule
Number of Latent Seeds & $Pass@1$ & $Avg@8$ & $Pass@8$ & Diversity (Euclidean distance) \\
\midrule
3 & 30.00 &	25.00 &	40.00  & 0.28  \\
30 & 26.67 & 23.33 & 30.00  & 0.31 \\
10 (Used) & 30.00 &	25.00 &	36.67 & 0.30 \\
\bottomrule
\end{tabular}
\end{table}

\paragraph{Ablation on Number of Latent Seeds.}
Table~\ref{ablation_seed} shows a quantitative comparison of different numbers of latent seeds, complementing the trends observed in Figure~\ref{reward_curve}(a). Based on 3, 10, and 30 seeds, Info-GRPO achieves comparable performance on all metrics, with no substantial or consistent advantage from increasing the seed count. In particular, 10 seeds match the best $Pass@1$ and $Avg@8$ scores while maintaining competitive $Pass@8$ performance. The measured semantic diversity also varies only slightly across settings, indicating that increasing the number of seeds mainly expands the implicit exploration space rather than yielding proportional gains in performance or diversity. These results support the conclusion that performance gains tend to stabilize around 10 seeds, which provides a balanced trade-off between expressive exploration and stable optimization. Therefore, 10 latent seeds are adopted as the default configuration in our experiments.

\begin{figure}[t]
\centering
\includegraphics[width=1.0\textwidth]{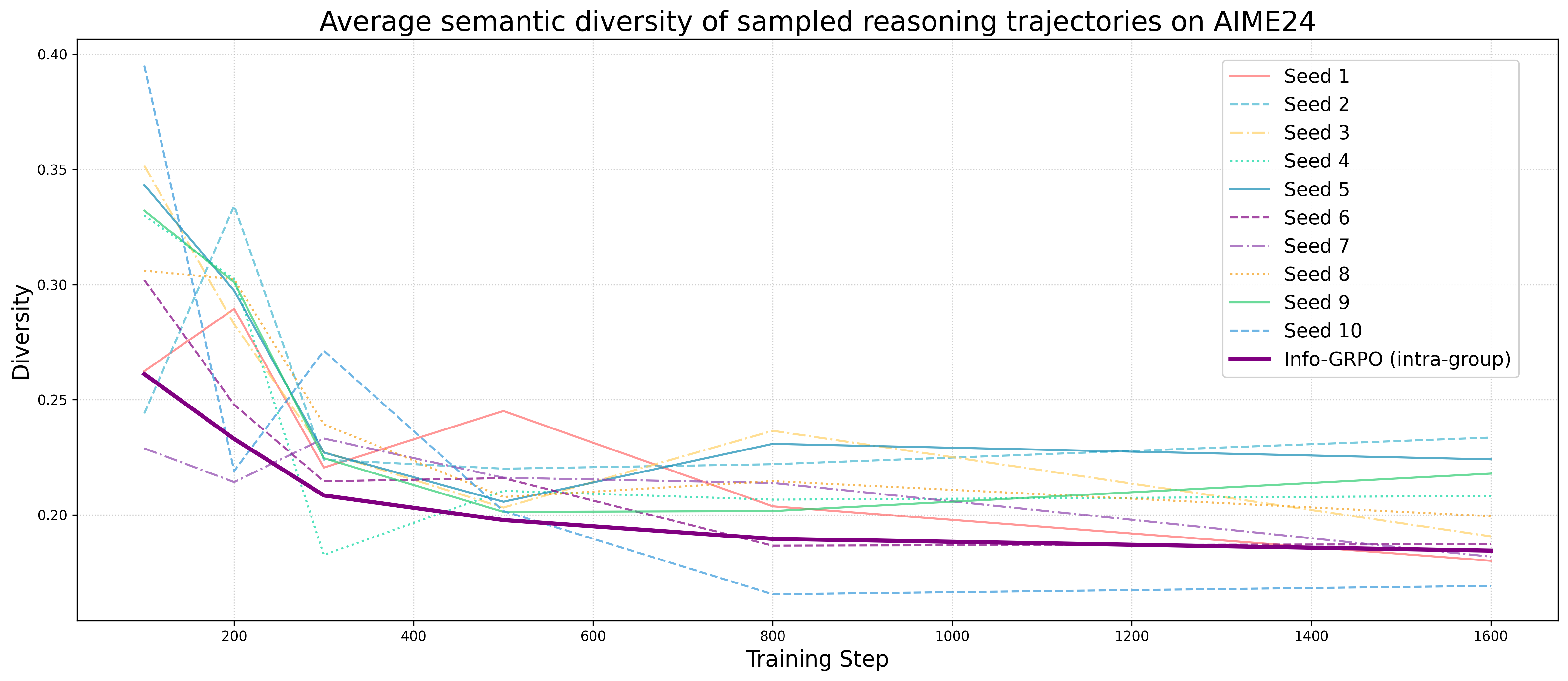}
\caption{
    Per-seed intra-group diversity dynamics of Info-GRPO (Qwen2.5-Math-7B) on AIME24 over an extended training horizon (up to 1600 steps).
    Each thin curve represents the intra-group diversity for a single latent seed, while the thick curve shows their average, reported as Info-GRPO (Intra-group) in Figure~\ref{reward_curve}(b).
    Although some seeds exhibit mild early fluctuations, all curves follow a generally downward trend and gradually stabilize, confirming that the averaged curve accurately reflects the typical training behavior across seeds.
}
\label{seed_curve}
\end{figure}

\paragraph{Analysis of the Policy Diversity.}
To complement the averaged intra-group diversity curve shown in Figure~\ref{reward_curve}(b), Figure~\ref{seed_curve} further visualizes the per-seed intra-group diversity dynamics of Info-GRPO across all 10 latent seeds used in our experiments.
Each curve corresponds to the diversity computed within a single latent seed, prior to averaging across seeds.
While a few seeds exhibit mild fluctuations during early training, the overall trend is consistently downward and stabilizes over the extended horizon.
This observation indicates that the averaged curve in Figure~\ref{reward_curve}(b) faithfully reflects a typical training behavior shared across seeds, rather than being dominated by a particular latent seed.
These results support the conclusion that Info-GRPO achieves stable and structured exploration by encouraging convergence within each latent-conditioned policy while preserving diversity across different seeds.

\begin{figure}[t]
\centering
\includegraphics[width=0.6\textwidth]{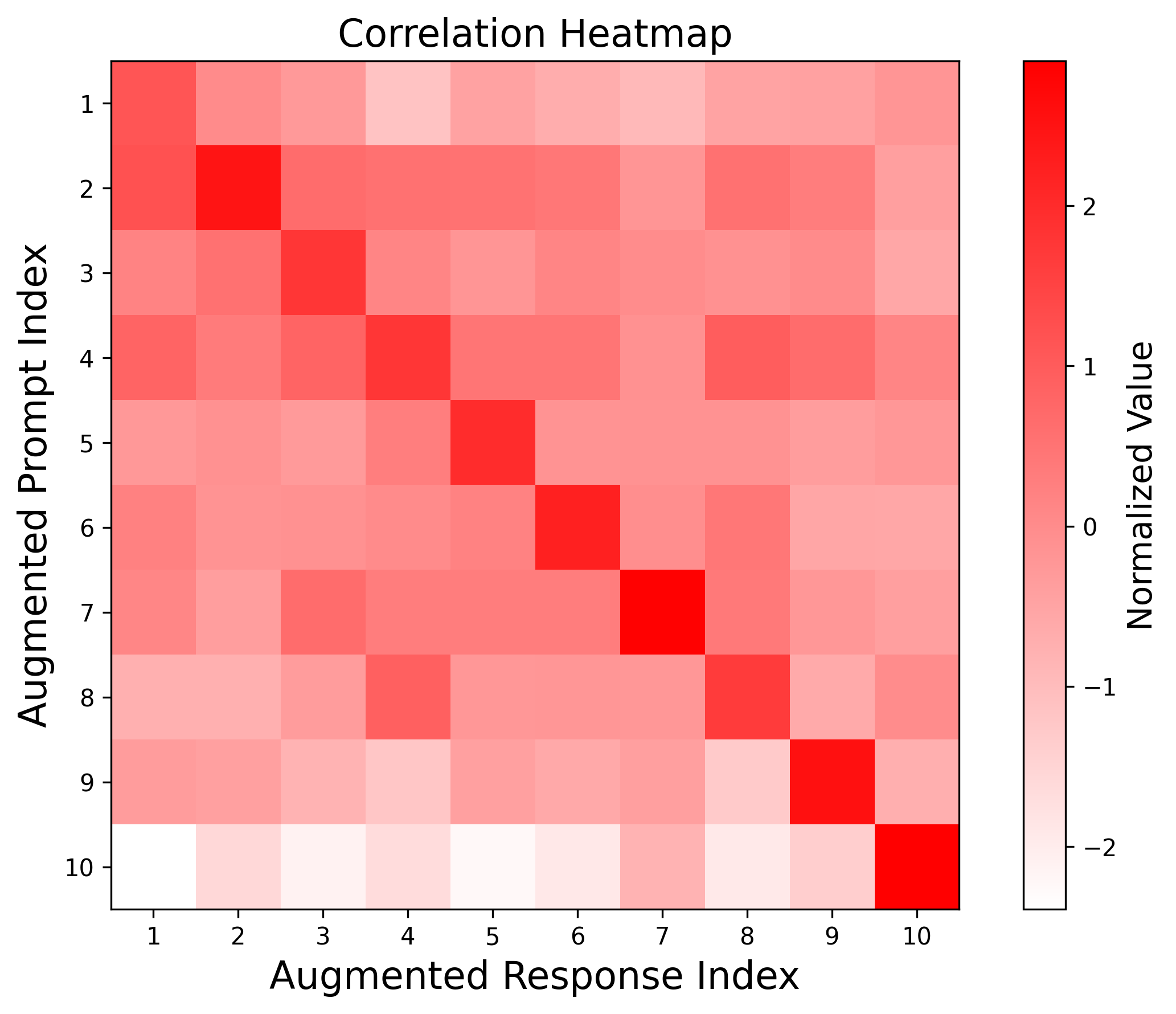}
\caption{The correlation heatmap for a case. It is generated by our Info-GRPO based on the DeepSeek-R1-Distill-Qwen-7B models.
}
\label{case_pic}
\end{figure}

\begin{table}[t]
\centering
\footnotesize
\caption{
The scores for 8 responses under different prompt seeds.
}
\label{case_result}
\setlength{\tabcolsep}{3pt}
\begin{tabular}{cccccccccccc}
\toprule
Response & No seed & Seed1 & Seed2 & Seed3 & Seed4 & Seed5 & Seed6 & Seed7 & Seed8 & Seed9 & Seed10\\
\midrule
R1 & 1 & 1 & 0 & 1 & 1 & 1 & 0 & 1 & 1 & 1 & 1 \\
R2 & 0 & 1 & 1 & 0 & 0 & 0 & 1 & 1 & 1 & 0 & 1 \\
R3 & 1 & 1 & 1 & 1 & 0 & 0 & 1 & 1 & 1 & 0 & 1 \\
R4 & 0 & 0 & 1 & 0 & 1 & 0 & 0 & 1 & 1 & 1 & 1 \\
R5 & 1 & 1 & 1 & 1 & 1 & 1 & 1 & 1 & 0 & 1 & 1 \\
R6 & 1 & 0 & 1 & 1 & 1 & 0 & 1 & 1 & 0 & 1 & 1 \\
R7 & 1 & 1 & 1 & 1 & 1 & 1 & 1 & 1 & 1 & 1 & 0 \\
R8 & 1 & 0 & 1 & 1 & 1 & 1 & 0 & 1 & 0 & 0 & 0 \\
\bottomrule
\end{tabular}
\end{table}

\subsection{Case Analysis}\label{Case}

\paragraph{Mathematical Samples.}

Note that in the main text, we only use augmented prompts during training, while evaluation is conducted without prompt augmentation.
Here, different augmented seeds are used at evaluation time solely for qualitative case analysis.
Figure~\ref{case_pic} presents a correlation heatmap for the following mathematical \textbf{Question 1}, and Table~\ref{case_result} reports the rewards of 8 sampled responses under different prompt seeds.
In this representative example, we observe that different seeds can lead the model to follow noticeably different solution strategies, which in turn result in different rewards.
Specifically, \textbf{Response 1.1} with reward 1 adopts a more systematic algebraic reasoning process, explicitly deriving variable constraints and exhaustively accounting for all valid configurations, leading to the correct solution, whereas \textbf{Response 1.2} with reward 0 follows a constructive, case-based reasoning path that prematurely restricts the solution space and consequently misses valid cases.
While this observation is based on a single illustrative example and does not imply that all seeds consistently induce such divergent reasoning behaviors, it nonetheless provides qualitative evidence that our correlation-guided approach is capable of steering the model toward different reasoning trajectories and encouraging exploration over alternative solution paths.

\begin{tcolorbox}[colframe=gray, colback=gray!10, title=Question 1 (Math), breakable]

Find the number of ways to place a digit in each cell of a 2x3 grid so that the sum of the two numbers formed by reading left to right is $999$, and the sum of the three numbers formed by reading top to bottom is $99$. The grid below is an example of such an arrangement because $8+991=999$ and $9+9+81=99$.
\[
\begin{array}{|c|c|c|}
\hline 0 & 0 & 8 \\
\hline 9 & 9 & 1 \\
\hline
\end{array}
\]

\end{tcolorbox}

\begin{tcolorbox}[
  colback=green!5,colframe=green!40!black,
  title=Response 1.1 (under seed 1 with reward 1),
  breakable
]

Alright, so I've got this problem where I need to figure out how many ways there are to place a digit in each cell of a 2$\times$3 grid. The conditions are that the sum of the two numbers formed by reading left to right in each row should be 999, and the sum of the three numbers formed by reading top to bottom in each column should be 99.

Let me try to visualize the grid. It's a 2$\times$3 grid, meaning there are 2 rows and 3 columns. Each cell will contain a digit from 0 to 9. The example given is:

\[
\begin{array}{|c|c|c|} \hline
0 & 0 & 8 \\ \hline
9 & 9 & 1 \\ \hline
\end{array}
\]

In this case, reading the first row left to right gives the number 008, which is 8, and the second row gives 991. Adding these together, $8 + 991 = 999$. For the columns, reading top to bottom, each column gives 09, 09, and 81. Adding these together, $9 + 9 + 81 = 99$.

Alright, so the key here is to figure out all such possible grids that satisfy these two conditions. Let me break it down.

First, let's denote the grid cells as follows:

\[
\begin{array}{|c|c|c|} \hline
a & b & c \\ \hline
d & e & f \\ \hline
\end{array}
\]

~

......

~

We find that:
\begin{itemize}
\item $c + f = 9$
\item $d = b + c + 1$
\item $a + d = 9$
\item $b + e = 9$
\end{itemize}

We express the variables in terms of $d$ and $c$:
\begin{itemize}
\item $a = 9 - d$
\item $b = d - c - 1$
\item $e = 10 - d + c$
\item $f = 9 - c$
\end{itemize}

We determine the constraints:
\begin{itemize}
\item $d$ ranges from 1 to 9
\item For each $d$, $c$ ranges from 0 to $d - 1$
\end{itemize}

The number of possible pairs $(c, d)$ is the sum from $d = 1$ to $9$ of the number of $c$ values, which is:
\[
\sum_{d=1}^{9} d = \frac{9 \times 10}{2} = 45
\]

Thus, the number of ways to place the digits in the grid is:
\[
\boxed{45}
\]

\end{tcolorbox}

\begin{tcolorbox}[
  colback=red!5,colframe=red!75!black,
  title=Response 1.2 (under seed 2 with reward 0),
  breakable
]

Alright, let me try to figure out how to solve this problem. So, we have a 2$\times$3 grid, right? We need to place digits in each cell such that two conditions are satisfied:

\begin{enumerate}
\item The sum of the two numbers formed by reading left to right in each row is 999.
\item The sum of the three numbers formed by reading top to bottom in each column is 99.
\end{enumerate}

~

......

~

\medskip
The key steps are:
\begin{enumerate}
\item The sum of the top digits $A + B + C = 9$.
\item The sum of the bottom digits $D + E + F = 9$.
\item The sum of the numbers formed by the columns must be 99.
\end{enumerate}

We analyze the columns:
\begin{itemize}
\item Column 1: $\text{col}_1 = D$ if $A = 0$, otherwise $10A + D$
\item Column 2: $\text{col}_2 = E$ if $B = 0$, otherwise $10B + E$
\item Column 3: $\text{col}_3 = F$ if $C = 0$, otherwise $10C + F$
\end{itemize}

The sum of these columns must be:
\[
\text{col}_1 + \text{col}_2 + \text{col}_3 = 99
\]

After thorough analysis, we find that the only valid grid is:
\begin{itemize}
\item $A = 0,\ B = 9,\ C = 0$
\item $D = 9,\ E = 0,\ F = 0$
\end{itemize}

This results in:
\begin{itemize}
\item $\text{col}_1 = 9$
\item $\text{col}_2 = 90$
\item $\text{col}_3 = 0$
\end{itemize}

Thus, the sum of the columns is $9 + 90 + 0 = 99$.

Therefore, the number of such grids is:
\[
\boxed{1}
\]

\end{tcolorbox}

\paragraph{Code Samples.}
We further analyse a code sample in \textbf{Question 2}.
The answers obtained from different seeds are provided in \textbf{Response 2.1} and \textbf{Response 2.2}.
It could be observed that although both solutions yield correct outputs, they rely on fundamentally different reasoning abstractions. One solution exploits problem-specific structural independence to collapse the state space, while the other solves the problem via a general-purpose dynamic programming formulation. This illustrates that different latent seeds can induce distinct yet valid reasoning trajectories rather than superficial variations. As a qualitative case, it is offered as an illustration of the seed-conditioned behavioural modes reported quantitatively in Figure~\ref{reward_curve}(b), not as causal evidence that the seeds index separate reasoning manifolds.

\begin{tcolorbox}[
  colframe=gray,
  colback=gray!10,
  title=Question 2 (Code),
  breakable
]

You will be given a question (problem specification) and will generate a correct Python program that matches the specification and passes all tests.

\medskip
\textbf{Question:}

Takahashi has a piano with 100 keys arranged in a row.
The $i$-th key from the left is called key $i$.

He will play music by pressing $N$ keys one by one.
For the $i$-th press, he will press key $A_i$, using his left hand if $S_i = L$, and his right hand if $S_i = R$.

Before starting to play, he can place both of his hands on any keys he likes, and his fatigue level at this point is $0$.

During the performance, if he moves one hand from key $x$ to key $y$, the fatigue level increases by $|y-x|$ (conversely, the fatigue level does not increase for any reason other than moving hands).

To press a certain key with a hand, that hand must be placed on that key.

\medskip
\textbf{Task:}
Find the minimum possible fatigue level at the end of the performance.

\medskip
\textbf{Input}

The input is given from Standard Input in the following format:
\begin{verbatim}
N
A_1 S_1
A_2 S_2
......
A_N S_N
\end{verbatim}

\textbf{Output}

Print the minimum fatigue level at the end of the performance.

\medskip
\textbf{Constraints}
\begin{itemize}
  \item $1 \leq N \leq 100$
  \item $1 \leq A_i \leq 100$
  \item $N$ and $A_i$ are integers.
  \item $S_i$ is \texttt{L} or \texttt{R}.
\end{itemize}

\medskip

\end{tcolorbox}

\begin{tcolorbox}[
  colback=green!5,
  colframe=green!50!black,
  title=Response 2.1 (under seed 3 with reward 1),
  breakable
]

Okay, so I've got this problem to solve. Let me read it carefully and understand what's being asked.

The problem is about a piano with 100 keys. Takahashi is going to press N keys in a sequence. Each press is either with his left hand (L) or right hand (R). The initial placement of both hands can be anywhere, and the goal is to minimize the total fatigue. Fatigue increases when moving a hand from one key to another, specifically by the distance moved each time.

So, the task is to find the minimal possible total fatigue after all the presses.
It seems like a dynamic programming problem because each step depends on the previous state (where each hand is placed). The state of the system after each step is determined by the positions of the left and right hands.

Let's think about the states. At each step $i$, the left hand is on some key $x$, and the right on some key $y$. The initial state is $x$ and $y$ can be any keys, but initially, the fatigue is 0.

......

So, the DP transition is as follows:

For each step $i$ (from 1 to $N$):
For each previous state $(x_{\text{prev}}, y_{\text{prev}})$:
If $S_i$ is `L`, then the new state is $(A_i, y_{\text{prev}})$ with cost increase $|A_i - x_{\text{prev}}|$.
Otherwise, the new state is $(x_{\text{prev}}, A_i)$ with cost increase $|A_i - y_{\text{prev}}|$.

After processing all steps, the minimum value over all states gives the answer.

......

\medskip
\textbf{Code:}
\fontsize{9}{10}\selectfont %
\begin{verbatim}

import sys

def main():
    N = int(sys.stdin.readline())
    s_list = []
    for _ in range(N):
        a, s = sys.stdin.readline().split()
        s_list.append((int(a), s))

    INF = float('inf')
    dp = [[INF] * 101 for _ in range(101)]

    # Initialize step 0: all (x, y) have cost 0
    for x in range(1, 101):
        for y in range(1, 101):
            dp[x][y] = 0

    for i in range(N):
        a, s = s_list[i]
        new_dp = [[INF] * 101 for _ in range(101)]

        for x_prev in range(1, 101):
            for y_prev in range(1, 101):
                if dp[x_prev][y_prev] == INF:
                    continue
                current_cost = dp[x_prev][y_prev]

                if s == 'L':
                    new_x = a
                    new_y = y_prev
                    cost_inc = abs(new_x - x_prev)
                else:
                    new_x = x_prev
                    new_y = a
                    cost_inc = abs(new_y - y_prev)

                new_cost = current_cost + cost_inc
                if new_cost < new_dp[new_x][new_y]:
                    new_dp[new_x][new_y] = new_cost

        dp = new_dp

    min_cost = INF
    for x in range(1, 101):
        for y in range(1, 101):
            if dp[x][y] < min_cost:
                min_cost = dp[x][y]

    print(min_cost)

if __name__ == '__main__':
    main()
\end{verbatim}

\end{tcolorbox}

\begin{tcolorbox}[
  colback=green!5,
  colframe=green!50!black,
  title=Response 2.2 (under seed 8 with reward 1),
  breakable
]

Okay, so I have this problem to solve, and I need to find the minimum possible fatigue level after playing a sequence of piano keys. Hmm, let me read the problem statement again to make sure I understand it correctly.

So, Takahashi has a piano with 100 keys. Each key press is either with his left or right hand. For each press, if he uses a hand, it has to be placed on the key before pressing. The fatigue increases by the absolute difference of the keys when moving the hand. The initial fatigue is zero. I need to find the minimal total fatigue after all the presses.

The question is to determine the initial positions of his hands and then track the movements to minimize the total distance moved.

......

To solve this problem, we need to determine the minimum possible fatigue level after playing a sequence of piano key presses using either the left or right hand. The fatigue level is determined by the distance the hands move between consecutive key presses.

......

\medskip
\textbf{Code:}
\fontsize{9}{10}\selectfont %
\begin{verbatim}
n = int(input())
left = []
right = []

for _ in range(n):
    a, s = input().split()
    a = int(a)
    if s == 'L':
        left.append(a)
    else:
        right.append(a)

def calculate(arr):
    total = 0
    for i in range(1, len(arr)):
        total += abs(arr[i] - arr[i-1])
    return total

left_fatigue = calculate(left)
right_fatigue = calculate(right)

print(left_fatigue + right_fatigue)
\end{verbatim}

\end{tcolorbox}

\subsection{Usage of LLMs}\label{LLM}

LLMs are employed exclusively for grammar checking in this manuscript. They are not utilized for writing assistance, literature retrieval, or research ideation.

\newpage

\end{document}